\newtheorem{theorem}{Theorem}
\newtheorem{lemma}{Lemma}
\newtheorem{definition}{Definition}
\newtheorem{proposition}{Proposition}
\newenvironment{sketch}{\proof}{\endproof}
\newcommand{\per}{\text{per\,}}
\title{Polynomial Semantics of Tractable Probabilistic Circuits}
\author[1]{Oliver Broadrick}
\author[1]{Honghua Zhang}
\author[1]{Guy Van den Broeck}
\affil[1]{%
    Computer Science Dept.\\
    University of California\\
    Los Angeles, California, USA
}
\begin{document}
\maketitle

\begin{abstract}
Probabilistic circuits compute multilinear polynomials that represent multivariate probability distributions. They are tractable models that support efficient marginal inference. However, various polynomial semantics have been considered in the literature (e.g., network polynomials, likelihood polynomials, generating functions, and Fourier transforms). The relationships between circuit representations of these polynomial encodings of distributions is largely unknown. In this paper, we prove that for distributions over binary variables, each of these probabilistic circuit models is equivalent in the sense that any circuit for one of them can be transformed into a circuit for any of the others with only a polynomial increase in size. They are therefore all tractable for marginal inference on the same class of distributions. Finally, we explore the natural extension of one such polynomial semantics, called probabilistic generating circuits, to categorical random variables, and establish that inference becomes \#P-hard.
\end{abstract}

\section{Introduction}\label{sec:intro}

Modeling probability distributions in a way that allows efficient probabilistic inference~(e.g. computing marginal probabilities) is a key challenge in machine learning. Much research towards meeting this challenge has led to the development of families of tractable models including bounded-treewidth graphical models such as hidden Markov models \citep{rabiner1986introduction}, (mixtures of) Chow-Liu Trees~\citep{chow1968approximating, meila2000learning,selvam2023mixtures}, determinantal point processes (DPPs)~\citep{dpps}, and various families of probabilistic circuits~(PCs) such as sum-product networks~\citep{spns,peharz2018probabilistic} and probabilistic sentential decision diagrams~\citep{psdds}. 
As a unifying representation for all aforementioned models, probabilistic circuits (PCs) compactly represent multilinear polynomials that encode probability distributions~(Fig.~\ref{fig:example}) \citep{pcs}. 
However, multiple semantics of the polynomials represented by a PC have been considered, and their relationships are largely unknown. We study these various semantics and show that for binary variables they are all equivalent in the sense that a circuit in one semantics can be tranformed into a circuit in any of the others with at most a polynomial change in size.\footnote{Some of the results in this paper were found independently by \citet{sanyam} and submitted to a conference around the same time; they provide the circuit transformation from generating polynomials to network polynomials (our Theorem~\ref{thm:pgf_to_barp}) and show that inference in categorical generating circuits is \#P-hard (our Theorem~\ref{thm:pgc_hard}).}

\newcommand{\great}{}
\newcommand{\good}{*}
\begin{figure*}[ht!]\center
\newcommand{\tikznumber}[1]{\small{#1}}
\newcommand{\backcolor}{yellow!15}
\begin{tikzpicture}[scale=.9, background rectangle/.style={fill=\backcolor, rounded corners}, show background rectangle]
    \node (pgf) at (0,0) {$g(x)$};
    \node (pm) at (-3,0) {$p(x,\bar{x})$};
    \node (p01) at (-5,-2) {$p(x)$};
    \node[\backcolor] (pn11) at (-3,-4) {$p_{-1,1}(x)$};
    \node (pn11) at (-3,-4) {};
    \node[\backcolor] (phatbar) at (0,-4) {$\hat{p}(x,\bar{x})$};
    \node (phatbar) at (0,-4) {};
    \node (phat) at (2,-2) {$\hat{p}(x)$};
    \path [\backcolor] (pgf) edge[ ] node[midway, above] {\tikznumber{1}\good} (pm);
    \path [<-] (pgf) edge[bend left] node[midway, below] {\tikznumber{2}\great} (pm);
    \path [->] (pm) edge[ ] node[midway, above left] {\tikznumber{3}\great} (p01);
    \path [\backcolor] (pm) edge[bend left ] node[midway, below right] {\tikznumber{4*}\great} (p01);
    \path [\backcolor] (p01) edge[ ] node[midway, below left] {\tikznumber{5}\great} (pn11);
    \path [\backcolor] (p01) edge[bend left] node[midway, above right] {6\great} (pn11);
    \path [\backcolor] (pn11) edge[ ] node[midway, below] {\tikznumber{7}\good} (phatbar);
    \path [\backcolor] (pn11) edge[bend left] node[midway, above] {\tikznumber{8}\great} (phatbar);
    \path [\backcolor] (phatbar) edge[ ] node[midway, below right] {\tikznumber{9}\great} (phat);
    \path [\backcolor] (phatbar) edge[bend left ] node[midway, above left] {\tikznumber{10*}\great} (phat);
    \path [\backcolor] (phat) edge[ ] node[midway, above right] {\tikznumber{11}\great} (pgf);
    \path [\backcolor] (phat) edge[bend left] node[midway, below left] {\tikznumber{12}\great} (pgf);
\end{tikzpicture}\quad\quad\quad
\newcommand{\backcolortwo}{green!10}
\begin{tikzpicture}[scale=.9, background rectangle/.style={fill=\backcolortwo, rounded corners}, show background rectangle]
    \node (pgf) at (0,0) {$g(x)$};
    \node (pm) at (-3,0) {$p(x,\bar{x})$};
    \node (p01) at (-5,-2) {$p(x)$};
    \node (pn11) at (-3,-4) {$p_{-1,1}(x)$};
    \node (phatbar) at (0,-4) {$\hat{p}(x,\bar{x})$};
    \node (phat) at (2,-2) {$\hat{p}(x)$};
    \path [->] (pgf) edge[ ] node[midway, above] {\tikznumber{1}\good} (pm);
    \path [<-] (pgf) edge[bend left] node[midway, below] {\tikznumber{2}\great} (pm);
    \path [->] (pm) edge[ ] node[midway, above left] {\tikznumber{3}\great} (p01);
    \path [<-] (pm) edge[bend left ] node[midway, below right] {\tikznumber{4*}\great} (p01);
    \path [->] (p01) edge[ ] node[midway, below left] {\tikznumber{5}\great} (pn11);
    \path [<-] (p01) edge[bend left] node[midway, above right] {6\great} (pn11);
    \path [->] (pn11) edge[ ] node[midway, below] {\tikznumber{7}\good} (phatbar);
    \path [<-] (pn11) edge[bend left] node[midway, above] {\tikznumber{8}\great} (phatbar);
    \path [->] (phatbar) edge[ ] node[midway, below right] {\tikznumber{9}\great} (phat);
    \path [<-] (phatbar) edge[bend left ] node[midway, above left] {\tikznumber{10*}\great} (phat);
    \path [->] (phat) edge[ ] node[midway, above right] {\tikznumber{11}\great} (pgf);
    \path [<-] (phat) edge[bend left] node[midway, below left] {\tikznumber{12}\great} (pgf);
\end{tikzpicture}

\caption{Polynomial time circuit transformations between polynomial semantics including: likelihood $p(x)$, network $p(x,\bar{x})$, generating $g(x)$, and Fourier $\hat{p}(x)$ polynomials. 
Previously known transformations are displayed on the left; (2) is given in \citep{pgcs}, and (3) is implicit in \citep{roth}. 
The results in this paper are shown on the right. Edges labeled by * correspond to transformations which map circuits of size $s$ to circuits of size $O(sn^2)$; 
other edges correspond to transformations which map circuits of size~$s$ to circuits of size~$O(s)$.}
\label{fig:main_result}
\end{figure*}

The simplest polynomial encoding of a probability distribution, which we call the \emph{likelihood polynomial}, directly computes the probability mass function \citep{roth}.
The standard polynomial used in the PC literature, called the \emph{network polynomial}~\citep{diff_approach},
uses additional input variables and special structure to enable computation of arbitrary marginal probabilities. While it is not obvious whether the circuit representation of a likelihood polynomial should support marginal inference, we provide a linear-time inference algorithm. We also give a polynomial-time transformation from likelihood polynomial circuits to network polynomial circuits, using a classic circuit complexity result of \citet{Strassen1973} to enable divisions.

Probability generating functions (\emph{generating polynomials} for short) have also been considered as semantics for circuits and support efficient marginal inference \citep{pgcs,on_inference_pgcs}. While it is straightforward to transform a circuit computing a network polynomial to a circuit computing a generating polynomial, the other direction is unclear. For example, there are distributions that can be succinctly expressed by circuits computing generating polynomials but not by decomposable\footnote{A standard structural property discussed in Section~\ref{sec:decomposability}.} circuits computing network polynomials that use only positive weights \citep{pgcs,honghua-first-paper,martens_expressive}. However, we present a polynomial-time transformation from generating polynomial semantics to network polynomial semantics again using the result of \citet{Strassen1973}. 

Lastly, the Fourier transform of the probability mass function (called characteristic function in probability theory) has been considered for inference in graphical models \citep{ermon} and as a semantics for circuits \citep{ccs}. We find simple linear-time transformations between circuits computing this Fourier transform and circuits computing the generating polynomial. This connects all aforementioned semantics (likelihood, network, generating, and Fourier polynomials) by polynomial-time transformations; Figure~\ref{fig:main_result} summarizes the transformations. Consequently, any distribution which can be succinctly expressed in one semantics can be succinctly expressed in them all, including for instance DPPs \citep{dpps} which were previously only known to be succinctly expressible using generating polynomials \citep{pgcs}.

Our transformations make no assumptions on the structure of the circuit.
However, it is common to use syntactic constraints on a circuit to guarantee desired semantic properties (e.g. a circuit with nonnegative weights and constants computes a polynomial with nonnegative coefficients), while possibly sacrificing succinctness \citep{alexis}. In particular, PCs are typically assumed to be decomposable -- children of product nodes contain disjoint sets of variables \citep{darwiche2002knowledge} -- to guarantee that the computed polynomial is multilinear. 
While our transformations hold regardless of syntactic properties, we show in Section~\ref{sec:decomposability} that they simplify for decomposable circuits. 

Finally, we extend our discussion to categorical distributions. While there is a specialized inference algorithm for circuits computing generating polynomials for binary distributions, generating polynomials are well-defined for arbitrary categorical distributions. Accordingly, in Section~\ref{sec:pgc_hard} we consider the problem of inference on a circuit computing a generating polynomial with $k$ categories and show that for $k\ge 4$ inference is \#P-hard.

\section{Background}\label{sec:background}
We use the mass function $\Pr:\{0,1\}^n\to \mathbb{R}$ to specify a probability distribution on $n$ binary random variables $\bm{X}=\{X_1,X_2,\ldots,X_n\}$, each taking values in $\{0,1\}$.
We denote $[n]=\{1,2,\ldots,n\}$.
We use $\bm{x}$ to denote an assignment to the random variables, and for any $S\subseteq [n]$, we let $\bm{x}_S$ denote the assignment $X_i=1$ for $i\in S$ and $X_i=0$ for $i\notin S$.
We study polynomials in indeterminates $x_1,\ldots,x_n$ which we abbreviate to $x$. A polynomial is \emph{multilinear} if it is linear in every variable. For example, the polynomials $x_1x_3-x_2x_3$ and $x_1x_2x_3+1$ are multilinear, but $x^2$ and $x_1x_2^7+1$ are~not.

In this paper we consider multilinear polynomials as representations of probability distributions. To compactly represent polynomials, we use arithmetic circuits, a fundamental object of study in computer science \citep{SY10} which have proven useful for representing tractable probabilistic models.

\begin{definition}
An arithmetic circuit (AC) is a
directed acyclic graph consisting of three types of nodes:
\begin{enumerate}
\item Sum nodes $\oplus$ with weighted edges to children;
\item Product nodes $\otimes$ with unweighted edges to children;
\item Leaf nodes, which are variables in $\{x_1,\ldots,x_n\}$ or constants in $\mathbb{R}$.
\end{enumerate}
An AC has one node of in-degree zero, and we refer to it as the \emph{root}. The size of an AC is the number of edges in it.
\end{definition}

Each node in an AC represents a polynomial: (i) each leaf represents the polynomial $x_i$ or a constant, (ii) each
sum node represents the weighted sum of the polynomials represented by its children, and (iii) each product node
represents the product of the polynomials represented by its children. The polynomial represented by
an AC is the polynomial represented by its root. We note that the standard definition of AC in the circuit complexity literature uses unweighted sums, but the models are equivalent up to constant factors. For the remainder of this paper we use the term circuit to mean arithmetic circuit.

Note that when we say that two polynomials/circuits are the same, we \emph{do not} mean that they agree on all inputs in $\{0,1\}^n$ but that they agree on all real inputs in $\mathbb{R}^n$; the polynomials are identical elements in the ring of polynomials~$\mathbb{R}[x_1,\ldots,x_n]$. Moreover, we note that while a polynomial may be large (e.g. containing exponentially many nonzero monomials) there may be a much more succinct circuit computing the polynomial (e.g. of polynomial size in the number of variables).

\begin{figure*}[ht!]\center
  \begin{subfigure}[b]{0.26\linewidth}
  \centering
  \includegraphics[height=5.2cm]{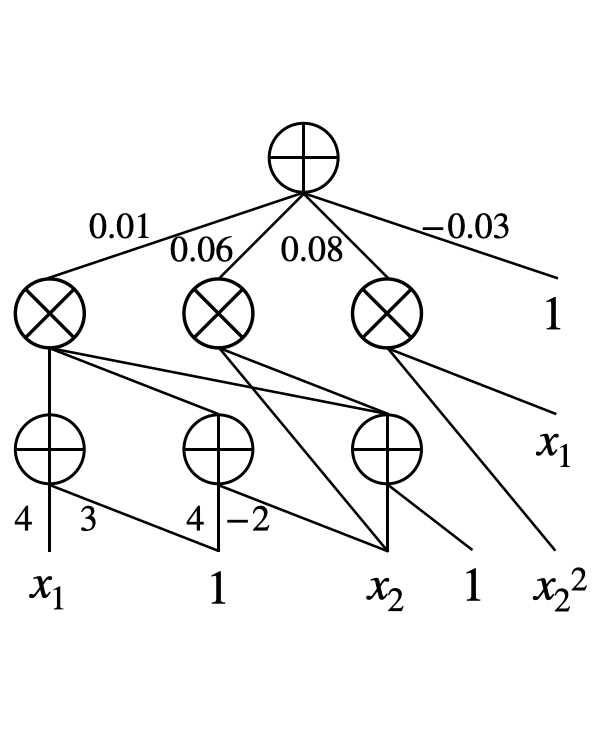}
  \caption{PC for likelihood polynomial.}
  \label{fig:example_a}
  \end{subfigure}
  ~
  \begin{subfigure}[b]{0.38\linewidth}
  \centering
  \includegraphics[height=5.2cm]{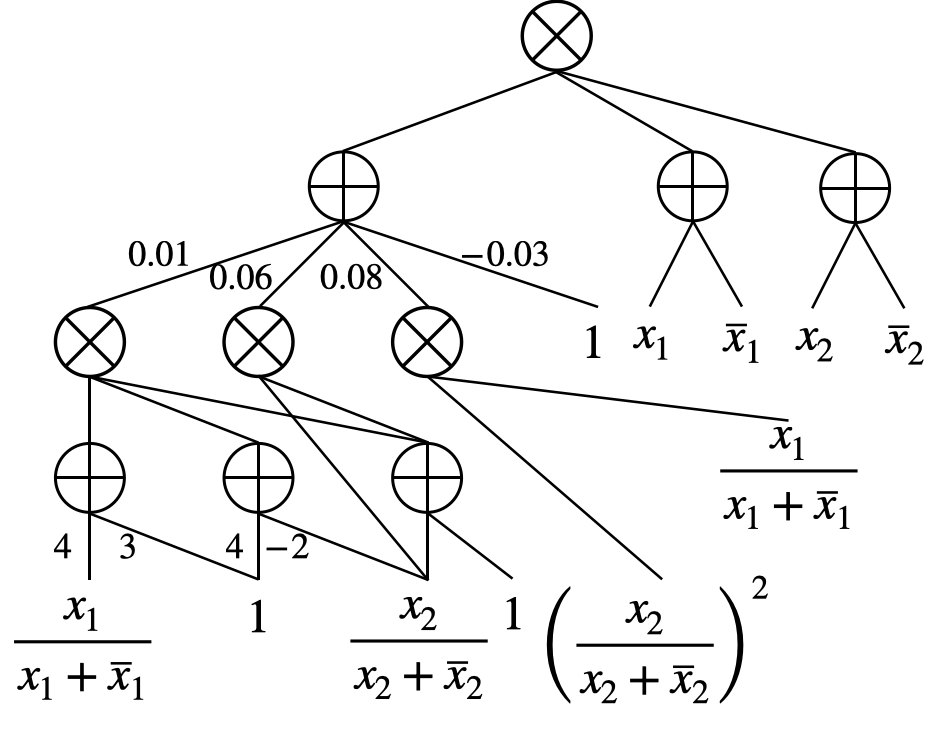}
  \caption{Leaf nodes replaced with division gadgets.}
  \label{fig:example_b}
  \end{subfigure}
  ~
  \begin{subfigure}[b]{0.32\linewidth}
  \centering
  \includegraphics[height=5.2cm]{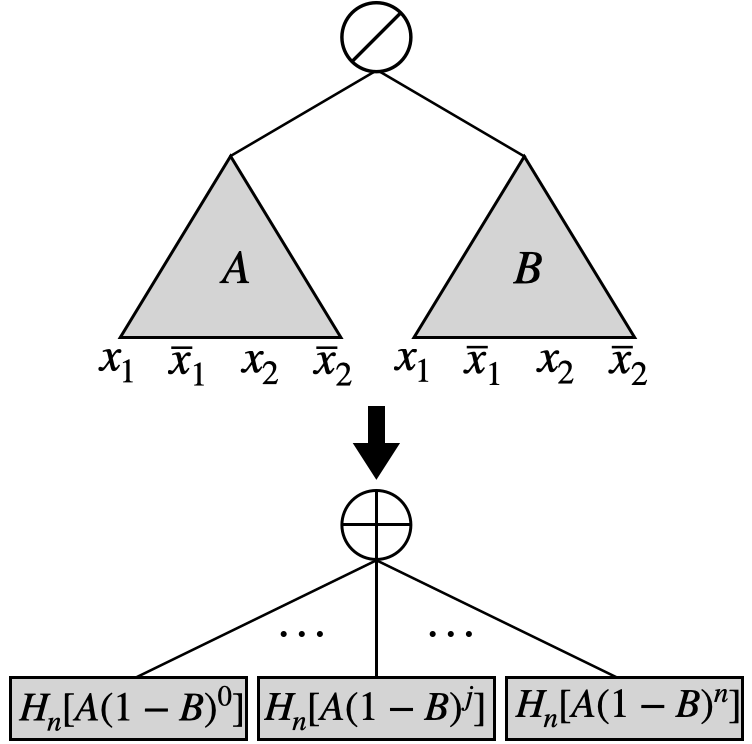}
  \caption{top: A single division node.\\ bottom: Sum of homogeneous parts.}
  \label{fig:example_c}
  \end{subfigure}
\caption{An example transforming a circuit representing a likelihood polynomial $p(x)=0.08x_1x_2+0.16x_1+0.12x_2+0.09$ to a circuit representing a network polynomial. First, (b) gadgets using division nodes are introduced at the leaves (as well as a multiplying factor) to obtain a rational function equivalent to the network polynomial. Then (c:top) all divisions are pushed to a single division node at the root so $p(x,\bar{x})=A/B$, and (c:bottom) a sum over necessary homogeneous parts of $A$ and $B$ is formed.}
\label{fig:example}
\end{figure*}

\section{Network and likelihood polynomials}\label{sec:network_and_likelihood}

There are various polynomials containing all the information of a binary distribution $\Pr$, in the sense that any value $\Pr(\bm{x})$ can be recovered from the polynomial alone. It is known that efficient circuit representations of some such polynomials allow tractable marginal inference, but a unified analysis of the various polynomial representations is lacking. In this section, we begin with the most studied such polynomial, the network polynomial, and establish its connections to the more natural -- yet still, as we show, tractable -- likelihood polynomial.

\subsection{Network Polynomials}

\citet{diff_approach} showed that Bayesian Networks can be compiled to circuits computing a certain polynomial representation of their distribution which he called the \emph{network polynomial} (also see \citet{castillo1995parametric}). 
The \emph{network polynomial} of binary probability distribution $\Pr$ is
\begin{equation*}
p(x_1,\ldots,x_n,\bar{x}_1,\ldots,\bar{x}_n)=\sum_{S\subseteq [n]}\Pr(\bm{x}_S)\prod_{i\in S}x_i\prod_{i\notin S}\bar{x}_i.
\end{equation*} 
Significant work towards learning and applying circuits computing this polynomial has since been developed~\citep{spns, peharz2020einsum, liu2021lossless}. In particular, this is the canonical polynomial computed by circuits in the growing literature on Probabilistic Circuits (PC)~\citep{pcs}.
A key feature of circuits computing network polynomials is that they enable simple linear-time marginal inference. Note that while the algorithm for marginalization is typically given for smooth and decomposable circuits, the following proposition holds for circuits of any structure which compute a network polynomial.

\begin{proposition}\label{prop:fast_marignals}
Computing marginals on a circuit of size $s$ representing a network polynomial takes $O(s)$ time. 
For the random variable assignment $X_i=1$, set $x_i=1$ and $\bar{x}_i=0$; for $X_i=0$, set $x_i=0$ and $\bar{x}_i=1$; marginalize out $X_i$ by setting $x_i=\bar{x}_i=1$. Evaluating the circuit now computes the marginal probability.
\end{proposition}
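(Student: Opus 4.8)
The plan is to split the claim into two independent parts: the $O(s)$ running time and the correctness of the stated substitution. The running time is immediate, so the substance lies in verifying that the substitution recovers the correct marginal. The key licensing observation is that the circuit and the network polynomial are equal as elements of $\mathbb{R}[x_1,\ldots,x_n,\bar{x}_1,\ldots,\bar{x}_n]$ — they agree on all real inputs, not merely on $\{0,1\}$-assignments — so evaluating the circuit at any real point computes the value of the network polynomial there. Hence I would reason entirely about the network polynomial's closed form and then transfer the conclusion to the circuit.

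For the running time, I would evaluate nodes in a topological order from leaves to root, storing one real value per node. A leaf is read off directly, and each sum or product node combines the already-computed values of its children, doing a constant amount of arithmetic per incoming edge. Since the size $s$ is the number of edges, the total work is $O(s)$, and the value at the root is the value of the polynomial under the chosen substitution.

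For correctness, partition $[n]$ into the set $E$ of variables assigned $X_i=1$, the set $F$ assigned $X_i=0$, and the set $M$ of marginalized variables; the target quantity is $\sum_{T\subseteq M}\Pr(\bm{x}_{E\cup T})$. I would substitute $x_i=1,\bar{x}_i=0$ for $i\in E$, then $x_i=0,\bar{x}_i=1$ for $i\in F$, and $x_i=\bar{x}_i=1$ for $i\in M$ into
\[
p=\sum_{S\subseteq[n]}\Pr(\bm{x}_S)\prod_{i\in S}x_i\prod_{i\notin S}\bar{x}_i,
\]
and track which monomials survive. For $i\in E$ the factor is $1$ if $i\in S$ and $0$ if $i\notin S$, forcing $E\subseteq S$; for $i\in F$ the factor is $0$ if $i\in S$ and $1$ if $i\notin S$, forcing $S\cap F=\emptyset$; for $i\in M$ the factor is $1$ regardless. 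Thus exactly the terms with $S=E\cup T$ for some $T\subseteq M$ survive, each contributing its coefficient, and the sum equals $\sum_{T\subseteq M}\Pr(\bm{x}_{E\cup T})$, the desired marginal.

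The step requiring the most care is not any calculation but the justification that the marginalization substitution is legitimate: setting $x_i=\bar{x}_i=1$ evaluates $p$ at a point outside $\{0,1\}^{2n}$, so correctness genuinely relies on the circuit computing the network polynomial as a formal identity rather than merely agreeing on Boolean inputs. Once this is made explicit, the remaining argument is routine bookkeeping of the three cases above.
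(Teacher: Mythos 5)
Your proof is correct and fleshes out exactly the standard argument that the paper leaves implicit (the proposition is stated with the algorithm embedded and no separate proof is given): a topological-order evaluation for the $O(s)$ bound, and a case analysis on the indicator substitutions showing that precisely the terms $S = E \cup T$, $T \subseteq M$, survive. Your explicit remark that correctness hinges on the circuit computing the network polynomial as a formal identity in $\mathbb{R}[x,\bar{x}]$ — since $x_i = \bar{x}_i = 1$ lies outside the Boolean cube — is the right point to emphasize and is consistent with the paper's own discussion in Section~\ref{sec:background}.
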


The network polynomial has a very specific structure. First, it is multilinear. Second, every nonzero monomial contains either $x_i$ or $\bar{x}_i$ for \emph{every} $i\in\{1,2,\ldots,n\}$. We wonder whether this structure of the monomials with variables $x_i$ and $\bar{x}_i$ is necessary for marginal inference. We next consider a simple polynomial which does not use this structure with the $\bar{x}_i$ variables, but as we show, still remains tractable.

\subsection{Likelihood Polynomials}

\citet{roth} considered perhaps the simplest polynomial representation of $\Pr$, that which directly computes $\Pr$ using variables $x_1,\ldots,x_n$. Such a polynomial can be obtained from a network polynomial by substituting $\bar{x}_i=1-x_i$ (transformation $3$ in Figure~\ref{fig:main_result}). We call this the \emph{likelihood polynomial}:
\begin{equation}\label{p01}
p(x_1,\ldots,x_n)=\sum_{S\subseteq [n]}\Pr(\bm{x}_S)\prod_{i\in S}x_i\prod_{i\notin S}(1-x_i).
\end{equation}
While conceptually simple, it is not clear how or whether it is possible to efficiently compute marginals given a circuit representation of the likelihood polynomial.
In particular, \citet{roth} considered only ``flat'' representations of the likelihood polynomial, where all monomials with nonzero coefficients are stored explicitly. While marginal inference is linear in the size of the flat representation, there is an exponential gap in the succinctness of circuits and flat representations. 

We note that both network polynomials and likelihood polynomials are multilinear. 
Indeed, inference on circuits that agree with $\Pr$ on all inputs in $\{0,1\}^n$ (like likelihood polynomials) is intractable without multilinearity. For example, even if we just allow circuits computing polynomials that are quadratic in each variable, marginal inference is already \#P-hard (e.g., implicit in the proof of Theorem~2 in \citet{pasha_hard}).

We show that on a circuit representing a likelihood polynomial, marginal probabilities can be computed in linear~time.

\begin{proposition}
Marginal probabilities on a circuit of size $s$ representing a likelihood polynomial can be computed in time $O(s)$.
\end{proposition}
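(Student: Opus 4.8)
The plan is to exhibit a direct evaluation algorithm based on a single observation: for a multilinear polynomial, summing a variable over $\{0,1\}$ coincides, up to a factor of two, with evaluating that variable at $1/2$. This collapses a marginal query into one feed-forward circuit evaluation at a real point.

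First I would recall that the likelihood polynomial satisfies $p(\bm{x}_S)=\Pr(\bm{x}_S)$ for every $S\subseteq[n]$: at the binary point $\bm{x}_S$ the product $\prod_{i\in T}x_i\prod_{i\notin T}(1-x_i)$ in Eq.~\eqref{p01} is nonzero only when $T=S$, so it picks out exactly the coefficient $\Pr(\bm{x}_S)$. Hence the joint probability of any full binary assignment is obtained by a single evaluation of the circuit, which costs $O(s)$ since each edge is processed once in topological order. A marginal query fixes a subset of variables to evidence values and sums the joint probability over all binary assignments to the remaining variables $M$; the goal is to realize $\sum_{T\subseteq M}p(\cdot)$ with one evaluation rather than the naive $2^{|M|}$.

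The key observation is the midpoint identity for affine functions. Because $p$ is multilinear, fixing all variables but $x_i$ leaves an affine function $a x_i + b$, and $(a\cdot 0 + b) + (a\cdot 1 + b) = a + 2b = 2\bigl(a\cdot\tfrac12 + b\bigr)$, so $p|_{x_i=0} + p|_{x_i=1} = 2\,p|_{x_i=1/2}$. Since the single-variable substitutions commute and $p$ remains affine in each untouched variable after substituting the others, iterating this identity over all $i\in M$ yields $\sum_{T\subseteq M}p = 2^{|M|}\,p|_{x_i=1/2,\ i\in M}$. The algorithm is then immediate: set $x_i$ to its evidence value for each conditioned variable, set $x_i = 1/2$ for each $i\in M$, evaluate the circuit once in $O(s)$, and multiply by $2^{|M|}$, an $O(n)$ overhead subsumed in $O(s)$ once we note that any variable absent from the circuit can be handled by the same global factor.

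I expect the main obstacle to be conceptual rather than technical: recognizing the averaging trick and justifying that it applies to the \emph{polynomial} (not merely to a function agreeing with $\Pr$ on $\{0,1\}^n$). This is where I would lean on the paper's convention that the circuit computes $p$ as an element of $\mathbb{R}[x_1,\ldots,x_n]$, so evaluating at the non-binary point $x_i=1/2$ returns the true polynomial value and the multilinearity of $p$ guarantees affineness in each variable. No structural assumptions (smoothness, decomposability) on the circuit are needed; once the identity is in hand, correctness and the $O(s)$ running time are routine.
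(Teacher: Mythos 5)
Your proof is correct and, in substance, identical to the paper's: the paper writes the marginal as $\bigl(\prod_i(x_i+\bar{x}_i)\bigr)\,p\bigl(x_1/(x_1+\bar{x}_1),\ldots\bigr)$ and invokes the network-polynomial evaluation rule, which at $x_i=\bar{x}_i=1$ specializes to exactly your ``evaluate at $1/2$ and multiply by $2$ per marginalized variable'' rule, with evidence handled by substituting $0$ or $1$. Your midpoint identity for multilinear polynomials is just a direct derivation of the same neutral substitution, and your appeal to the ring-of-polynomials convention to justify evaluating at non-binary points is exactly the right place to be careful.
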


\begin{proof}
Consider the following expression where $p$ denotes the likelihood polynomial (Eq.~\ref{p01}).
\begin{align}\label{eq:likelihood_expression}
&\left(\prod_{i=1}^{n}(x_i+\bar{x}_i) \right) p\left(\frac{x_1}{x_1+\bar{x}_1},\ldots,\frac{x_n}{x_n+\bar{x}_n}\right)  && 
\end{align}
This expression simplifies as follows:
\begin{align*}
&\left(\prod_{i=1}^{n}(x_i+\bar{x}_i)\right)\cdot\\& \quad\quad\quad \sum_{S\subseteq [n]}\Pr(\bm{x}_s)\prod_{i\in S}\frac{x_i}{x_i+\bar{x}_i}\prod_{i\notin S}\left(1-\frac{x_i}{x_i+\bar{x}_i}\right) && \\
&= \sum_{S\subseteq [n]}\Pr(\bm{x}_s)\prod_{i\in S}x_i\prod_{i\notin S}\bar{x}_i && \\
& =p(x,\bar{x}). &&  
\end{align*}
Therefore, this expression\footnote{Readers familiar with the weighted model counting task on decomposable logic circuits might recognize a neutral labeling function in this expression~\citep{algebraic_model_counting}.}~(\ref{eq:likelihood_expression}) behaves functionally as the network polynomial, and so any marginal probability can be computed as in Proposition~\ref{prop:fast_marignals}.
\end{proof}

Moreover, this expression (Eq.~\ref{eq:likelihood_expression}) naturally corresponds to a circuit computing the network polynomial \emph{using division nodes}; starting with a circuit computing the likelihood polynomial, replace inputs $x_i$ with $x_i/(x_i+\bar{x}_i)$ and multiply the whole circuit by $\prod_{i=1}^n(x_i+\bar{x}_i)$. However, the PC literature does not typically use division nodes, and available software libraries and known algorithms would need to be reconsidered to use division nodes, not to mention possible divide-by-zero problems -- which will arise in Section~\ref{sec:generating}. This leads to the question, can we find a circuit computing an equivalent polynomial without use of division nodes? Classic work in the circuit complexity literature by \citet{Strassen1973} provides a positive answer.

\begin{theorem}[Strassen]\label{thm:strassen}
If there is an arithmetic circuit of size $s$ with division nodes computing polynomial~$p$ of degree~$d$ in $n$ variables over an infinite field, then there exists an arithmetic circuit of size $\text{poly}(s,d,n)$ that computes $p$ using only addition and multiplication nodes.
\end{theorem}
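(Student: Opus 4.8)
The plan is to follow Strassen's classical division-elimination technique, whose two ingredients are (i) a change of variables that makes every divisor invertible as a formal power series, and (ii) a simulation of the circuit by \emph{truncated} power series in which one keeps only the homogeneous components up to degree $d$. Since $p$ is a genuine polynomial of degree $d$, its power series expansion (in shifted coordinates) is itself, so recovering the components of degree $0$ through $d$ at the root and summing them reconstructs $p$ exactly.

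First I would observe that every division node must divide by a nonzero polynomial, since otherwise the circuit would not compute a well-defined rational function. Let $g_1,\ldots,g_m$ with $m\le s$ be the polynomials computed at the children feeding the division nodes; each is a nonzero element of the polynomial ring. Over an infinite field $F$, the finite union of the zero sets $\{x : g_j(x)=0\}$ is a proper subset of $F^n$, so there exists a point $a\in F^n$ with $g_j(a)\neq 0$ for all $j$. I then replace every input $x_i$ by $x_i+a_i$ (adding $n$ constant leaves and sum gates); the modified circuit computes $p(x+a)$, and now each divisor $g_j(x+a)$ has nonzero constant term $g_j(a)$, hence is invertible in the ring of formal power series.

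Second, for each node $v$ of the shifted circuit I would build subcircuits computing its homogeneous components $[v]_0,[v]_1,\ldots,[v]_d$ with respect to the new variables, propagated bottom-up. A leaf $x_i+a_i$ contributes $[v]_0=a_i$ and $[v]_1=x_i$; a sum node adds components componentwise ($O(d)$ gates); a product node $v=u\cdot w$ forms the truncated convolution $[v]_i=\sum_{j=0}^{i}[u]_j[w]_{i-j}$ ($O(d^2)$ gates). The crucial case is a division node computing $1/g$: writing $g=g_0(1-h)$ with $g_0=g(a)\neq 0$ and $h$ having zero constant term, one has $1/g=g_0^{-1}\sum_{k\ge 0}h^k$, and because $h$ has no constant term the sum truncated at degree $d$ is exact modulo degree ${>}\,d$, computable from the components of $g$ with $\mathrm{poly}(d)$ gates (by accumulating powers of $h$, or by Newton iteration). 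I expect this inversion step, together with the bookkeeping that every truncated-series operation remains correct modulo degree $d$, to be the main technical obstacle. Replacing each of the $s$ original nodes by such a $\mathrm{poly}(d)$-size gadget, plus the $O(n)$ shift gates, gives a division-free circuit of size $\mathrm{poly}(s,d,n)$ computing the components of $p(x+a)$. Finally I would sum $[\,\text{root}\,]_0+\cdots+[\,\text{root}\,]_d=p(x+a)$ and undo the shift by substituting $x_i\mapsto x_i-a_i$ at the inputs, recovering $p$ with only $O(n)$ additional gates.
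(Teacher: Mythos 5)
The paper does not actually prove this statement: it is imported as a classical result of Strassen (1973) and used as a black box (only the homogenization subroutine is reused explicitly, in Section~3). Your proposal is essentially the standard proof of that classical theorem---a generic shift of coordinates to make every divisor invertible as a formal power series, a gate-by-gate simulation by power series truncated at degree $d$ with geometric-series (or Newton) inversion at division gates, and a final summation of the homogeneous parts of the root followed by undoing the shift---and it is correct in all essentials; the size accounting ($O(d^2)$ gates per simulated gate, hence $O(sd^2)+O(n)$ overall) is consistent with the claimed $\text{poly}(s,d,n)$ bound. One small imprecision: the children of division nodes compute rational functions, not polynomials in general, so ``each is a nonzero element of the polynomial ring'' should be replaced by the observation that each divisor is a nonzero rational function; a point $a$ avoiding the zero sets of the finitely many nonzero numerator and denominator polynomials involved still exists over an infinite field, and the rest of your argument goes through unchanged.
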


Therefore we have the following theorem, corresponding to transformation $4$ in Figure~\ref{fig:main_result}.

\begin{theorem}\label{thm:likelihood_to_barp}
Let $\Pr$ be a probability distribution on $n$ binary random variables. Then a circuit of size $s$ computing the likelihood polynomial for $\Pr$ can be transformed to a circuit of size $O(sn^2)$ computing the network polynomial for $\Pr$.
\end{theorem}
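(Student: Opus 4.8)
The plan is to follow the construction suggested in Figure~\ref{fig:example}: first realize the network polynomial by a small circuit that uses division nodes, and then eliminate the divisions while keeping the size polynomial. For the first step, starting from the given size-$s$ circuit for the likelihood polynomial $p$, I would form the expression in Eq.~\ref{eq:likelihood_expression}: replace each variable leaf $x_i$ by a gadget computing $x_i/(x_i+\bar{x}_i)$ (one sum node and one division node, with $x_i+\bar{x}_i$ shared across occurrences), and multiply the root by $\prod_{i=1}^n(x_i+\bar{x}_i)$. By the preceding proposition this rational function equals the network polynomial $p(x,\bar{x})$ exactly. The gadgets and the multiplying factor add only $O(n)$ nodes, so, assuming as usual that $s\ge n$, the resulting circuit \emph{with divisions} still has size $O(s)$.

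Next I would push all divisions to a single node at the root. Maintaining at each gate a numerator/denominator pair and propagating it upward via $(g_1/h_1)+(g_2/h_2)=(g_1h_2+g_2h_1)/(h_1h_2)$ and $(g_1/h_1)(g_2/h_2)=(g_1g_2)/(h_1h_2)$, each gate contributes only $O(1)$ new numerator/denominator operations. This yields two \emph{division-free} circuits $A$ and $B$, each of size $O(s)$, with $p(x,\bar{x})=A/B$. The \emph{degrees} of $A$ and $B$ may be enormous, but their circuit sizes are not.

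The crux is extracting the quotient without division, which is where Strassen's idea enters. Since the only denominators introduced were the forms $x_i+\bar{x}_i$, the denominator $B$ is a product of such forms, so the point $a=\mathbf 1\in\mathbb{R}^{2n}$ gives $B(a)\neq 0$. Substituting $x_i\mapsto a_i+t(x_i-a_i)$ (and likewise $\bar{x}_i$) turns $A$ and $B$ into power series in a single grading variable $t$, and I would extract their $t$-homogeneous parts $A_0,\dots,A_d$ and $B_0,\dots,B_d$ up to $d=\deg p=n$ (the network polynomial is homogeneous of degree $n$). Because $A=pB$ with $\deg p=d$, the coefficients $p_j$ of $p(a+t(x-a))=\sum_{j=0}^d p_j t^j$ satisfy the triangular recurrence $p_j=(A_j-\sum_{k<j}p_k B_{j-k})/B_0$; since $B_0=B(a)$ is a \emph{nonzero constant}, every division here is merely scaling, so no division nodes are needed. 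Evaluating $\sum_{j=0}^d p_j$ at $t=1$ recovers $p(x)=p(x,\bar{x})$ and costs $O(d)$ extra gates.

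For the size bound, extracting the homogeneous parts up to degree $d$ of a size-$O(s)$ division-free circuit replaces each product gate by a length-$(d{+}1)$ convolution ($O(d^2)$ operations) and each sum gate by $O(d)$ operations, for a total of $O(sd^2)$; the triangular solve adds $O(d^2)$. With $d=n$ this is $O(sn^2)$, the claimed bound. The main obstacle to anticipate is precisely this last step: pushing divisions to the root does \emph{not} by itself produce a polynomial circuit (dividing two circuits is not a legal operation), and the whole argument hinges on choosing an evaluation point where $B$ is nonzero so that the quotient can be recovered using only scalar divisions, together with the degree bound $d=n$ that keeps the number of homogeneous parts, and hence the size blowup, quadratic.
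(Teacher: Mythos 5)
Your proposal is correct and follows essentially the same route as the paper: introduce the division gadgets of Eq.~\ref{eq:likelihood_expression}, push all divisions to a single root quotient $A/B$, and then eliminate that quotient by working at a point where $B$ is nonzero, extracting homogeneous parts up to the degree bound $d=n$, for a total size of $O(sn^2)$. The only (cosmetic) difference is that you recover the quotient via the triangular recurrence $p_j=(A_j-\sum_{k<j}p_kB_{j-k})/B_0$, whereas the paper truncates the geometric series $\sum_{j}A(1-B)^j$ after normalizing $B$ to have constant term one; these are equivalent instantiations of Strassen's division-elimination argument (Theorem~\ref{thm:strassen}).
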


To illustrate the algorithm, we consider the example in Figure~\ref{fig:example}. Figure~\ref{fig:example_a} shows the initial circuit that represents the likelihood polynomial. Figure~\ref{fig:example_b} shows the circuit computing the expression with division nodes. To remove divisions, the first observation is that all division nodes can be moved `up' to a single division at the output node using the identities $(a/b)\times (c/d)=(ac)/(bd)$ and $(a/b) + (c/d)=(ad+bc)/(bd)$, as visualized in Figure~\ref{fig:example_c}. At this point we have the network polynomial written as a ratio of two polynomials, $p(x,\bar{x})=A(x,\bar{x})/B(x,\bar{x})$. Without loss of generality we assume $B$ has constant term one, i.e. $B(0,0,\ldots,0)=1$. If $B$ does not already have constant term $1$, then its inputs can be translated and the whole function scaled as needed.

One additional result from the circuit complexity literature is needed at this point; for any circuit $f$ of size $s$ and degree $d$, a circuit of size $O(d^2s)$ can be constructed (with $d+1$ outputs) computing $H_0[f],H_1[f],\ldots,H_d[f]$ where $f=\sum_iH_i[f]$, and each $H_i[f]$ is \textit{homogeneous} meaning that every monomial of $H_i[f]$ has degree $i$ \citep{SY10}. This process is called \emph{homogenization}, and the $H_i[f]$'s the \emph{homogeneous parts} of $f$. 

The final division node can now be eliminated by use of the common identity $\frac{a}{1-r}=\sum_{j=0}^\infty ar^j$. We have
\begin{equation*}
p(x,\bar{x})=\frac{A}{B} =\frac{A}{1-(1-B)} =\sum_{j=0}^\infty A(1-B)^j. 
\end{equation*}

In particular, these equalities hold for the homogeneous parts of $p(x,\bar{x})$. And, because $B(0,\ldots,0)=1$, we know that $1-B$ has constant term zero, and so all monomials in $(1-B)^j$ have degree at least $j$. Since we know that the network polynomial $p(x,\bar{x})$ has all terms of degree exactly $n$, we only need to compute 
\begin{align*}
H_n[p(x,\bar{x})]
& =\sum_{j=0}^n H_n[A(1-B)^j], && 
\end{align*}
as illustrated by Figure~\ref{fig:example_c}. In particular, a single circuit computing $(1-B)^j$ for $j\in\{0,1\ldots,n\}$ can be homogenized in addition to homogenizing $A$, to compute $p(x,\bar{x})$ with size $O(sn^2)$.

We note that the circuit obtained by this transformation contains negative parameters. 

\section{Generating Polynomials}\label{sec:generating}

So far we have considered circuits that directly compute a distribution. However, there are other well known polynomial representations of probability distributions which have been shown as promising representations for tractable probabilistic modeling. 
\citet{pgcs} consider circuits computing the \emph{probability generating function} of a distribution. Generating functions are well studied in mathematics as theoretical objects \citep{gfs}, but have recently been identified as useful data structures \citep{pgcs,ppl-gfs,zaiser2023exact}. The \emph{generating polynomial} for probability distribution $\Pr$ is
\begin{equation*}
g(x_1,\ldots,x_n)=\sum_{S\subseteq [n]}\Pr(\bm{x}_s)\prod_{i\in S}x_i.
\end{equation*}
\citet{pgcs} call circuits computing generating polynomials Probabilistic Generating Circuits (PGCs) 
and show that marginal inference on PGCs is tractable. For a PGC of size $s$ in $n$ variables, they provide an $O(sn \log n \log \log n)$ marginal inference algorithm which has been improved by \citet{on_inference_pgcs} to $O(sn)$. It is also noted by \citet{pgcs} that circuits computing network polynomials can be transformed to PGCs simply by replacing $\bar{x}_i$'s by $1$, and so any distribution with a polynomial-size circuit computing its network polynomial also has a polynomial-size PGC; this is transformation $2$ in Figure~\ref{fig:main_result}. On the other hand, they show that there are distributions with polynomial-size PGCs but for which any decomposable circuit computing the network polynomial using only positive weights has exponential size (and additional PGC lower bounds are known \citep{markus_lowerbound}). It is left as an open question whether this separation still holds for circuits with unrestricted weights. We provide a negative answer. Using a method similar to that in Section~\ref{sec:network_and_likelihood}, we show that given a PGC, one can find a circuit computing the network polynomial with a polynomial increase in size; this is transformation 1 in Figure~\ref{fig:main_result}.

\begin{theorem}\label{thm:pgf_to_barp}
Let $\Pr$ be a probability distribution on $n$ binary random variables. Then a circuit of size $s$ computing the probability generating function for $\Pr$ can be transformed to a circuit of size $O(sn^2)$ computing the network polynomial for~$\Pr$.
\end{theorem}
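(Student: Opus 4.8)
The plan is to mirror the division-based argument of Theorem~\ref{thm:likelihood_to_barp}: first write the network polynomial as a small circuit with division nodes built directly on top of the given generating circuit, and then eliminate the divisions. The crucial identity I would use is
\[
p(x,\bar{x}) = \left(\prod_{i=1}^n \bar{x}_i\right) g\!\left(\frac{x_1}{\bar{x}_1},\ldots,\frac{x_n}{\bar{x}_n}\right),
\]
which plays exactly the role that expression~(\ref{eq:likelihood_expression}) played for the likelihood polynomial. To verify it, I would substitute the definition of $g$ and observe that $\left(\prod_{i=1}^n \bar{x}_i\right)\prod_{i\in S}\frac{x_i}{\bar{x}_i} = \prod_{i\in S}x_i\prod_{i\notin S}\bar{x}_i$, so the right-hand side collapses term-by-term to $\sum_{S\subseteq[n]}\Pr(\bm{x}_S)\prod_{i\in S}x_i\prod_{i\notin S}\bar{x}_i = p(x,\bar{x})$.

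Given the size-$s$ circuit for $g$, this identity yields a circuit computing $p(x,\bar{x})$ of size $O(s+n)$ that uses only $n$ division nodes: introduce a fresh variable $\bar{x}_i$ for each input, replace each leaf $x_i$ by a gadget computing $x_i/\bar{x}_i$, and multiply the root by $\prod_{i=1}^n \bar{x}_i$. The remaining task is to remove these divisions with only a polynomial blowup, and here I would reuse the procedure from Theorem~\ref{thm:likelihood_to_barp}. Pushing all divisions up to a single node at the root writes $p(x,\bar{x}) = A/B$ with $A,B$ division-free and computed by circuits of size $O(s)$; since the network polynomial has degree exactly $n$, the expansion $A/B = \sum_{j\ge 0} A(1-B)^j$ only requires homogeneous parts up to degree $n$, so truncating at $j=n$ and homogenizing $A$ together with $(1-B)^j$ up to degree $n$ produces a division-free circuit of size $O(sn^2)$, consistent with Strassen's Theorem~\ref{thm:strassen}.

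The step I expect to be the main obstacle is the divide-by-zero issue flagged earlier: the denominators here are the variables $\bar{x}_i$ themselves, so $B$ vanishes at the origin and the series $A/B=\sum_j A(1-B)^j$ is not immediately valid there (the constant term of $B$ is $0$, so $1-B$ does not have constant term that makes the expansion well defined). I would resolve this exactly as indicated around Theorem~\ref{thm:likelihood_to_barp}, by translating the inputs to a point where $B$ does not vanish (e.g.\ shifting each $\bar{x}_i$) and rescaling so the shifted denominator has constant term one; the formal expansion and homogenization then go through on the shifted function, and translating the variables back---an operation realized by an $O(n)$-size gadget at the leaves---recovers $p(x,\bar{x})$. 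Because total degree is invariant under translation and stays equal to $n$, the truncation at $j=n$ and the homogenization cost are unaffected by the shift, so the overall size bound remains $O(sn^2)$.
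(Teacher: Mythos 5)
Your proposal is correct and follows essentially the same route as the paper: the identity $p(x,\bar{x})=\bigl(\prod_{i=1}^n \bar{x}_i\bigr)\,g(x_1/\bar{x}_1,\ldots,x_n/\bar{x}_n)$ is exactly the paper's starting point, and the division elimination (push divisions to the root, translate so the denominator has constant term one, expand $A/B=\sum_j A(1-B)^j$, truncate and homogenize up to degree $n$) is the same Strassen-based procedure the paper invokes from its Theorem~\ref{thm:likelihood_to_barp}. You also correctly flag and resolve the divide-by-zero subtlety that the paper itself highlights as the crucial point of this transformation.
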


\begin{proof}
We obtain the desired circuit by first constructing a circuit that computes $p(x,\bar{x})$ using division nodes. Observe
\begin{align*}
& \left(\prod_{i=1}^{n}\bar{x}_i\right) g\left(\frac{x_1}{\bar{x}_1},\frac{x_2}{\bar{x}_2},\ldots,\frac{x_n}{\bar{x}_n}\right) && \\
&\quad=\left(\prod_{i=1}^{n}\bar{x}_i\right)\sum_{S\subseteq [n]}\Pr(\bm{x}_S)\prod_{i\in S}\frac{x_i}{\bar{x}_i} && \\
&\quad=\sum_{S\subseteq [n]}\Pr(\bm{x}_S)\prod_{i\in S}x_i\prod_{i\notin S}\bar{x}_i && \\
&\quad=p(x_1,\ldots,x_n,\bar{x}_1,\ldots,\bar{x}_n) && 
\end{align*}
The degree of $p(x,\bar{x})$ is $n$, and so using Theorem~\ref{thm:strassen} as in Section~\ref{sec:network_and_likelihood}, there is a circuit computing $p(x,\bar{x})$ without division nodes of size $O(sn^2)$.
\end{proof}

We note the similarity of the proof of Theorem~\ref{thm:likelihood_to_barp} to that of Theorem~\ref{thm:pgf_to_barp}. They both involve constructing a circuit to represent $p(x,\bar{x})$ initially using division nodes and then removing the division nodes. We also note the crucial difference between the proofs; in the construction for Theorem~\ref{thm:pgf_to_barp}, the circuit with division nodes \emph{can not be used to evaluate $p(x,\bar{x})$ directly because it would require division by zero whenever $\bar{x}_i=0$ for any $i\in[n]$}. Therefore the ability to remove divisions while maintaining equivalence of the polynomial computed is essential for this transformation to be meaningful. As one immediate consequence, this implies the existence of polynomial size PCs computing network polynomials for DPPs since \cite{pgcs} showed the existence of polynomial size PGCs for DPPs. Another practical benefit is that rather than using a bespoke polynomial-interpolation algorithm for inference in PGCs \citep{on_inference_pgcs}, there is a simple feedforward (and easily implemented, on a GPU for example) method of inference for PGCs after the transformation has been performed.

\section{Fourier Transforms}\label{sec:fourier}

Fourier analysis involves representing functions in the frequency domain and is ubiquitous across math and computer science. \citet{ccs} show that circuits representing Fourier transforms (called \emph{characteristic functions} in probability theory) can improve learning in a mixed discrete-continuous setting while still supporting marginal inference when the circuit is smooth and decomposable (see Section~\ref{sec:decomposability} for discussion of these properties). \citet{ermon} use Fourier representations for inference in graphical models too. The \emph{Fourier transform}~\citep{odonnell} of pseudoboolean function $p:\{0,1\}^n\to\mathbb{R}$ is the function $\hat{p}:\{0,1\}^n\to\mathbb{R}$ given by 
\begin{equation*}
\hat{p}(x)=2^{-n}\sum_{v\in\{0,1\}^n}p(v)(-1)^{\langle v,x\rangle}
\end{equation*}
where $\langle v,x\rangle$ is the standard inner (dot) product over the reals.
It is convenient that in this binary case, $\hat{p}(x)$ can also be simply written as a multilinear polynomial (note that the equality holds on its domain $\{0,1\}^n$):
\begin{equation}\label{ft}
\hat{p}(x)=2^{-n}\sum_{S\subseteq [n]}p(v_S)\prod_{i\in S}(1-2x_i)
\end{equation}
where $v_S$ is the element of $\{0,1\}^n$ with $v_i=1$ for $i\in S$ and $v_i=0$ for $i\notin S$. To see this, identify $S$ with $v_S$, and then $(-1)^{\langle v_S,x\rangle}=(-1)^{\sum_{i\in S} x_i}=\prod_{i\in S}(-1)^{x_i}=\prod_{i\in S}(1-2x_i)$.
For the rest of the paper we use $\hat{p}(x)$ to refer to this multilinear polynomial (Eq.~\ref{ft}). We note that Fourier analysis of binary functions is a rich subject in its own right and refer the reader to \citep{odonnell}.

While there is no obvious connection between network polynomials, generating functions, and Fourier transforms, we show that they are in fact closely related. 
This relation hinges on switching between the domains $\{0,1\}^n$ and $\{-1,1\}^n$. Accordingly, we define for any multilinear polynomial $f$ its counterpart $f_{-1,1}$ as follows:
\begin{equation}\label{eq:swap}
f_{-1,1}(x_1,\ldots,x_n)=f\left(\frac{1-x_1}{2},\ldots,\frac{1-x_n}{2}\right),
\end{equation}
also a multilinear polynomial. Similarly, observe that
\begin{equation}\label{eq:swapback}
f(x_1,\ldots,x_n)=f_{-1,1}\left(1-2x_1,\ldots,1-2x_n\right).
\end{equation}
Note that $f$ and $f_{-1,1}$ compute \emph{the same function} on the respective domains $\{0,1\}^n$ and $\{-1,1\}^n$ up to the bijection $\phi:\{0,1\}\to \{-1,1\}$ given by $\phi(b)=(-1)^b$ applied bitwise. In particular, Equations~\ref{eq:swap}~and~\ref{eq:swapback} can be applied to circuits with modifications at only the leaves, giving the following lemma.

\begin{lemma}\label{lem:swap}
A circuit of size $s$ computing polynomial $f$ (respectively $f_{-1,1}$) can be transformed to a circuit of size $O(s)$ computing $f_{-1,1}$ (respectively $f$).
\end{lemma}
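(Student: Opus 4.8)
The plan is to prove both directions by a purely local modification confined to the leaves of the circuit, exploiting the fact that the maps in Equations~\ref{eq:swap} and~\ref{eq:swapback} act on each variable independently through a single affine substitution. No restructuring of the internal sum and product nodes will be needed.

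For the forward direction, suppose $C$ is a circuit of size $s$ computing the polynomial $f(x_1,\ldots,x_n)$. By definition $f_{-1,1}$ is the result of the substitution $x_i\mapsto (1-x_i)/2$, so I would realize this substitution inside the circuit itself. For each variable $i$ I build a constant-size \emph{gadget} computing $(1-x_i)/2$: a single sum node whose children are the constant leaf $1$ reached by an edge of weight $1/2$ and the variable leaf $x_i$ reached by an edge of weight $-1/2$. I then redirect every edge that originally pointed to a leaf $x_i$ so that it instead points to the root of the gadget for variable $i$. Because substitution of polynomials for the input variables commutes with the weighted-sum and product operations computed at the internal nodes, the modified circuit computes $f\!\left((1-x_1)/2,\ldots,(1-x_n)/2\right)$ as an identity in $\mathbb{R}[x_1,\ldots,x_n]$, which is exactly $f_{-1,1}$. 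The reverse direction is entirely symmetric: starting from a circuit for $f_{-1,1}$ and invoking Equation~\ref{eq:swapback}, I replace each variable leaf $x_i$ by a gadget computing $1-2x_i$ (a sum node with the constant leaf $1$ at weight $1$ and the leaf $x_i$ at weight $-2$), yielding a circuit for $f(x_1,\ldots,x_n)$.

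For the size bound, each gadget has constant size and I introduce at most one gadget per variable, so the gadgets contribute $O(n)$ edges in total; the redirection step only changes the targets of existing edges and creates no new ones. Since any variable that actually occurs in $C$ already accounts for at least one edge, we have $n=O(s)$ for any nontrivial circuit, and the transformed circuit has size $s+O(n)=O(s)$.

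I expect no genuine obstacle in this argument, as it is essentially a careful bookkeeping exercise. The two points that deserve explicit care are: (i) confirming that the construction produces an identity of polynomials rather than mere agreement on $\{0,1\}^n$, which holds because evaluating a circuit under a substitution of its input leaves computes precisely the composed (substituted) polynomial; and (ii) verifying that the required affine maps $(1-x_i)/2$ and $1-2x_i$ are indeed expressible by a single weighted sum node together with a constant leaf in the circuit model of the paper, so that each gadget genuinely has $O(1)$ size.
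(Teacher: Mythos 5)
Your proposal is correct and matches the paper's approach: the paper justifies this lemma in a single sentence by noting that Equations~\ref{eq:swap} and~\ref{eq:swapback} are affine substitutions that can be applied at the leaves only, which is exactly the gadget construction you spell out. Your additional care about the size accounting and about polynomial identity versus agreement on $\{0,1\}^n$ is sound but not needed beyond what the paper already asserts.
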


We now make a simple observation that connects Fourier transforms with generating polynomials; up to a constant factor, generating polynomials \emph{are} Fourier transforms, written on the domain $\{-1,1\}^n$.

\begin{proposition}\label{prop:pgf_is_fourier}
Let $\Pr$ be a probability distribution on $n$ binary random variables with generating polynomial $g(x)$ and Fourier polynomial $\hat{p}_{-1,1}(x)$ on the domain $\{-1,1\}$. Then $g(x)=2^n\hat{p}_{-1,1}(x)$.
\end{proposition}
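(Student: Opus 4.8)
The plan is to prove the identity by direct substitution, since both sides are multilinear polynomials indexed by subsets $S \subseteq [n]$ and it suffices to match their coefficients. First I would recall the two relevant expressions. The generating polynomial is $g(x) = \sum_{S\subseteq[n]} \Pr(\bm{x}_S) \prod_{i\in S} x_i$, and the Fourier polynomial in its multilinear form (Eq.~\ref{ft}) is $\hat{p}(x) = 2^{-n}\sum_{S\subseteq[n]} p(v_S)\prod_{i\in S}(1-2x_i)$, where the pseudoboolean function $p$ is the mass function itself, so that $p(v_S) = \Pr(\bm{x}_S)$. The key observation is that these two sums share the same coefficient structure: one has monomials $\prod_{i\in S} x_i$ and the other has monomials $\prod_{i\in S}(1-2x_i)$, each weighted by $\Pr(\bm{x}_S)$, so the only work is to convert each factor $(1-2x_i)$ into $x_i$.

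Next I would apply the domain-swap substitution (Eq.~\ref{eq:swap}) that defines $\hat{p}_{-1,1}$, namely $x_i \mapsto \tfrac{1-x_i}{2}$. The single computation driving the whole proof is that under this substitution each linear factor transforms as $1 - 2x_i \mapsto 1 - 2\cdot\tfrac{1-x_i}{2} = x_i$. Substituting this into the multilinear form of $\hat{p}$ gives $\hat{p}_{-1,1}(x) = 2^{-n}\sum_{S\subseteq[n]} \Pr(\bm{x}_S)\prod_{i\in S} x_i = 2^{-n} g(x)$, and multiplying through by $2^n$ yields $g(x) = 2^n \hat{p}_{-1,1}(x)$, as desired.

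There is no serious obstacle here; the proof is essentially a one-line substitution, and I expect the \emph{only} point requiring care to be the bookkeeping at the level of the mass function: one must confirm that the $p$ appearing in the Fourier transform is exactly the likelihood polynomial, so that $p(v_S)$ in Eq.~\ref{ft} coincides with $\Pr(\bm{x}_S)$ in the generating polynomial. Once this identification is made, the index sets $S$ and their coefficients align term by term and the substitution $1-2x_i \mapsto x_i$ finishes the argument. It is also worth stating explicitly that the claimed equality is one of polynomials in $\mathbb{R}[x_1,\ldots,x_n]$, not merely an agreement of functions on $\{-1,1\}^n$; this is automatic here, since a polynomial change of variables applied to the multilinear representation produces a genuine polynomial identity.
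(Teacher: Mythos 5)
Your proof is correct and is essentially the paper's own argument run in the opposite direction: the paper starts from $g(x)$ and rewrites $x_i = 1-2\bigl(\tfrac{1-x_i}{2}\bigr)$ to recognize $2^n\hat{p}_{-1,1}(x)$, while you start from $\hat{p}(x)$ and apply the substitution $x_i\mapsto\tfrac{1-x_i}{2}$ so that $1-2x_i\mapsto x_i$; both hinge on the identical one-line identity. Your added remarks about identifying $p(v_S)$ with $\Pr(\bm{x}_S)$ and about the equality holding in $\mathbb{R}[x_1,\ldots,x_n]$ are sound and consistent with the paper.
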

\begin{proof}
\begin{align*}
g(x)&=\sum_{S\subseteq [n]}\Pr(\bm{x}_S)\prod_{i\in S}x_i &&\\
&=\sum_{S\subseteq [n]}\Pr(\bm{x}_S)\prod_{i\in S}\left(1-2\left(\frac{1-x_i}{2}\right)\right) && \\
&=2^n\hat{p}_{-1,1}(x). &&\qedhere
\end{align*}
\end{proof}

Using only the ability to switch between the domains $\{0,1\}^n$ and $\{-1,1\}^n$ and Proposition~\ref{prop:pgf_is_fourier}, we now have transformations $11$ and $12$ in Figure~\ref{fig:main_result}.

\begin{theorem}\label{thm:pgf_to_and_fro_fourier}
Let $\Pr$ be a probability distribution on $n$ binary random variables. Then a circuit of size $s$ computing the generating polynomial $g(x)$ (respectively $\hat{p}(x)$) for $\Pr$ can be transformed to a circuit of size $O(s)$ representing the Fourier transform $\hat{p}(x)$ (respectively $g(x)$) for $\Pr$.
\end{theorem}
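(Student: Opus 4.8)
The plan is to chain together the two tools established immediately before the statement: Proposition~\ref{prop:pgf_is_fourier}, which supplies the algebraic identity $g(x)=2^n\hat{p}_{-1,1}(x)$, and Lemma~\ref{lem:swap}, which lets us pass between a multilinear polynomial $f$ and its domain-swapped counterpart $f_{-1,1}$ at only a constant-factor cost in circuit size. Since $g$ and $\hat{p}_{-1,1}$ differ only by the scalar $2^n$, and $\hat{p}_{-1,1}$ is by definition the $\{-1,1\}$-counterpart of $\hat{p}$, the whole argument is a short composition of these two cheap operations, applied once in each direction.

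First I would handle the direction from $g$ to $\hat{p}$. Starting from a circuit of size $s$ computing $g(x)$, I would attach a single scaling factor $2^{-n}$ at the root; by Proposition~\ref{prop:pgf_is_fourier} the resulting circuit computes $2^{-n}g(x)=\hat{p}_{-1,1}(x)$, at a cost of only $O(1)$ additional size. Since $\hat{p}_{-1,1}$ is exactly the $\{-1,1\}$-counterpart of $\hat{p}$, I would then invoke Lemma~\ref{lem:swap} in its ``from $f_{-1,1}$ to $f$'' direction with $f=\hat{p}$, transforming the circuit into one of size $O(s)$ that computes $\hat{p}(x)$.

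For the reverse direction from $\hat{p}$ to $g$, I would run the same steps backward. Beginning with a circuit for $\hat{p}(x)$, I would apply Lemma~\ref{lem:swap} in the ``from $f$ to $f_{-1,1}$'' direction to obtain a circuit of size $O(s)$ computing $\hat{p}_{-1,1}(x)$, and then attach a scaling factor $2^n$ at the root so that, again by Proposition~\ref{prop:pgf_is_fourier}, the circuit computes $2^n\hat{p}_{-1,1}(x)=g(x)$.

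In each direction only one constant-sized scaling node and one application of the swap lemma are used, so the final circuit has size $O(s)$ as claimed. I do not anticipate a genuine obstacle here: the only point requiring care is bookkeeping---tracking which direction of Lemma~\ref{lem:swap} to apply and placing the factor $2^{\pm n}$ on the correct side of the identity $g=2^n\hat{p}_{-1,1}$---since the substantive work (the algebraic identity and the leaf-only domain swap) is already carried out by the preceding results.
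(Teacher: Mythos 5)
Your proposal is correct and follows exactly the paper's own argument, which simply cites Proposition~\ref{prop:pgf_is_fourier} and Lemma~\ref{lem:swap}; you have merely spelled out the scaling by $2^{\pm n}$ and the direction of the domain swap explicitly. No gaps here.
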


\begin{proof}
Proposition~\ref{prop:pgf_is_fourier} and Lemma~\ref{lem:swap}.
\end{proof}

Having observed this connection between generating polynomials and Fourier polynomials, we have completed a path between $p(x)$ and $\hat{p}(x)$ in Figure~\ref{fig:main_result} (the upper half), i.e. a polynomial-time transformation between circuits computing them. However, we observe that this path more naturally corresponds to computing the inverse Fourier transform, and there is a symmetric set of transformations that compute $\hat{p}(x)$ from $p(x)$ in a more natural way. These transformations will form the lower half of Figure~\ref{fig:main_result}. We now give the more standard definition of the Boolean Fourier transform $\hat{p}(x)$, in which the values of $\hat{p}(x)$ are the coefficients needed to write $p(x)$ as a linear combination of parity functions~\citep[Thm~1.1]{odonnell}:
\begin{align*}
p(x) &=\sum_{S\subseteq [n]}\hat{p}(v_S)(-1)^{\sum_{i\in S}x_i} && \\
&=\sum_{S\subseteq [n]}\hat{p}(v_S)\prod_{i\in S}(1-2x_i) &&
\end{align*}
where equalities hold on the domain $\{0,1\}^n$. The values $\left\{\hat{p}(v_S)\right\}_{S\subseteq [n]}$ are called the \emph{Fourier coefficients}, or, collectively, the \emph{Fourier spectrum} of $p(x)$.
Moreover, this is more naturally written on the domain $\{-1,1\}^n$ due to the equivalence of the parity functions $\prod_{i\in S}(1-2x_i)$ to the monomials $\prod_{i\in S}x_i$ on the respective domains $\{0,1\}^n$ and $\{-1,1\}^n$.
That is, we have 
\begin{align}\label{eq:simple-fourier}
p_{-1,1}(x) &=\sum_{S\subseteq [n]}\hat{p}(v_S)\prod_{i\in S}x_i. &&
\end{align}
Now we recall the transformation from $g(x)$ to $p(x,\bar{x})$ (Theorem~\ref{thm:pgf_to_barp} and transformation 1 in Figure~\ref{fig:main_result}). In general, this transformation takes a polynomial in indeterminates $x_1,\ldots,x_n$ with monomials $\left\{c_S\prod_{i\in S}x_i\right\}_{S\subseteq [n]}$ and produces a polynomial in indeterminates $x_1,\ldots,x_n,\bar{x}_1,\ldots,\bar{x}_n$ with monomials $\left\{c_S\prod_{i\in S}x_i\prod_{i\notin S}\bar{x}_i\right\}_{S\subseteq [n]}$; this latter polynomial allows us to `extract' the coefficients of the former by the substitution $\bar{x}_i=1-x_i$. As seen in Eq~\ref{eq:simple-fourier}, we now have an identical problem where we would like to compute $\hat{p}(x)$ from $p_{-1,1}(x)$, i.e., to `extract' the Fourier coefficients from $p_{-1,1}(x)$. So, we analogously define the polynomial $\hat{p}(x,\bar{x})$
\begin{align*}
\hat{p}(x,\bar{x}) &=\sum_{S\subseteq [n]}\hat{p}(v_S)\prod_{i\in S}x_i\prod_{i\notin S}\bar{x}_i
\end{align*}
and obtain transformations 7 and 8 in Figure~\ref{fig:main_result} (by Theorem~\ref{thm:pgf_to_barp}). Moreover, by definition, the relationship between $\hat{p}(x,\bar{x})$ and $\hat{p}(x)$ is identical to that between $p(x,\bar{x})$ and $p(x)$, and so transformations 9 and 10 in Figure~\ref{fig:main_result} follow (i.e., using Theorem~\ref{thm:likelihood_to_barp}).

Having now completed the transformations presented in Figure~\ref{fig:main_result}, we ask how they simplify in the presence of structural constraints common in the PC literature.

\section{Decomposability}\label{sec:decomposability}
So far we make no assumptions on the structural properties of circuits; in this section, we consider the special case where the circuit is \emph{decomposable} \citep{darwiche2002knowledge}, which is a common assumption that guarantees tractable marginal inference. We show that in this case some of the transformations described before can be simplified. 
We use the \emph{scope} of a node to refer to the set of all $i$ such that variables $x_i$ or $\bar{x}_i$ appear as inputs among its descendants and itself.

\begin{definition}[Decomposability]
A product node is decomposable if its children have disjoint scopes. A circuit is decomposable if all its product nodes are decomposable.\footnote{This property is called syntactic multilinearity in the arithmetic circuits literature.}
\end{definition}

\begin{definition}[Smoothness]
A sum node in indeterminates $x$ and $\bar{x}$ is smooth if its children have the same scope.
A circuit is smooth if all of its sum nodes are smooth.
\end{definition}

Decomposability is a very common property because it guarantees multilinearity and, when paired with smoothness, ensures tractable marginal inference. In particular, it is well known that if a circuit is smooth and decomposable, it computes a network polynomial \citep{spns,peharz2015theoretical,pcs}.
We note that if a circuit is decomposable, then it can be made smooth efficiently (increasing the size at most by a linear factor \citep{pcs} and less for certain decomposable structures \citet{andy_smoothing}).

We now show how the transformations used for Theorems~\ref{thm:likelihood_to_barp}, \ref{thm:pgf_to_barp}, and \ref{thm:pgf_to_and_fro_fourier} can be simplified for decomposable circuits. First, we show that in decomposable circuits Fourier transforms correspond to trivial modifications \emph{at only the leaves}.

\begin{theorem}\label{thm:just_leaves}
A decomposable circuit of size $s$ representing a likelihood polynomial $p(x)$ can be transformed to a decomposable circuit of size $O(s)$ representing its Fourier transform $\hat{p}(x)$ by only modifying the leaves.
\end{theorem}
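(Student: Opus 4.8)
The plan is to exploit two structural features of the Fourier transform: it is \emph{linear}, and it \emph{factorizes over disjoint sets of variables}. Together these let me push the transform through the internal nodes of a decomposable circuit so that all of the real work happens at the leaves. Concretely, to a node with scope $A$ I would associate the Fourier transform of its polynomial taken over its own variables $A$ (normalized by $2^{-|A|}$ rather than by $2^{-n}$). With this per-scope normalization, the transform of a product of polynomials on disjoint scopes $A_1,A_2$ is \emph{exactly} the product of their transforms, with no leftover constant, and the transform of a weighted sum of polynomials sharing a common scope is the same weighted sum of their transforms. Hence product nodes (decomposable, so their children have disjoint scopes) and sum nodes (provided their children share a scope) are preserved verbatim, and the global factor $2^{-n}$ of Equation~\ref{ft} reassembles automatically at the root as the per-scope factors contributed by the leaves multiply together through the product nodes.

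It then remains to specify the leaf gadgets. Each leaf is a univariate object -- a variable $x_i$, the complementary input $1-x_i$ appearing in the likelihood form, or a constant -- so I replace it by its one-variable Fourier transform, which for a function $\ell$ of $x_i$ is the affine polynomial $\tfrac12\bigl(\ell(0)+\ell(1)(1-2x_i)\bigr)$; for instance $x_i\mapsto \tfrac12-x_i$, the complement $1-x_i\mapsto \tfrac12$, and a constant is left unchanged. Each replacement has constant size and leaves the wiring of the circuit untouched, so the result is again decomposable and of size $O(s)$. A cleaner but equivalent route is to compose two leaf-only transforms already available: first map the likelihood circuit to a generating circuit by replacing each complementary input $1-x_i$ with the constant $1$ (the network-to-generating substitution of \citet{pgcs}, read under the identification $\bar{x}_i=1-x_i$), and then apply Theorem~\ref{thm:pgf_to_and_fro_fourier} -- that is, Proposition~\ref{prop:pgf_is_fourier} together with Lemma~\ref{lem:swap} -- which realizes $g\mapsto \hat{p}$ through the leaf substitution $x_i\mapsto 1-2x_i$ and a global scaling by $2^{-n}$. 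Each step is a leaf substitution plus scaling and therefore preserves decomposability while incurring only $O(s)$ size.

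The main obstacle is the sum nodes: the transform commutes with a sum node only when its children share a scope, i.e.\ when the circuit is \emph{smooth}, whereas the hypothesis grants only decomposability. I would therefore first smooth the circuit, which in the likelihood encoding is harmless: a node missing variable $j$ can be multiplied by the gadget $x_j+(1-x_j)$, which equals the constant $1$ as a polynomial and so preserves the represented likelihood polynomial while enlarging the scope to include $j$; this keeps the circuit decomposable and increases its size by at most a linear factor. After smoothing, the structural induction above applies and the only genuine changes occur at the leaves. What remains is bookkeeping: checking that the per-scope normalization composes correctly through products (so that the single factor $2^{-n}$ is recovered at the root), and verifying that the smoothing gadgets themselves transform consistently -- the gadget $x_j+(1-x_j)$ maps to $(\tfrac12-x_j)+\tfrac12 = 1-x_j$, which is precisely the Fourier transform over $\{j\}$ of the constant $1$ -- so that correctness holds node by node.
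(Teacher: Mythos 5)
Your main route is the same as the paper's: a structural induction that pushes the per-scope (i.e.\ $2^{-|A|}$-normalized) Fourier transform through product nodes using decomposability and through sum nodes using linearity, leaving only univariate leaf gadgets of the form $\ell \mapsto \tfrac12\bigl(\ell(0)+\ell(1)(1-2x_i)\bigr)$. You are right---and more explicit than the paper's own write-up---that the sum-node step only commutes with the per-scope transform when the children share a scope; the transform of a child missing variable $j$ picks up a correction factor $(1-x_j)$. (That some such hypothesis is genuinely needed is shown by the single-leaf circuit computing the constant $2^{-n}$, which is the likelihood polynomial of the uniform distribution but whose Fourier transform $2^{-n}\prod_i(1-x_i)$ cannot be obtained by modifying one leaf.) However, your fix---smooth first---is in tension with the statement you are proving: smoothing inserts gadgets at internal edges, so the result is no longer obtained ``by only modifying the leaves,'' and in general it costs a factor of $n$, giving $O(sn)$ rather than $O(s)$. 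You should either state that the theorem is being proved for smooth and decomposable circuits, or account for the missing-variable correction factors directly rather than by preprocessing.

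Two further slips. First, your ``cleaner but equivalent route'' does not work: a circuit computing the likelihood polynomial has leaves $x_i$ and constants only; the factors $(1-x_i)$ appearing in Eq.~\ref{p01} are not syntactically present as identifiable inputs, so there is nothing to ``replace with the constant $1$.'' Indeed the likelihood-to-generating direction is not a leaf substitution at all---it is essentially the content of Theorem~\ref{thm:likelihood_to_barp} (or Lemma~\ref{lem:decomp_likelihood_to_network} in the decomposable case, costing $O(sn)$), whereas the easy leaf substitution $\bar{x}_i\mapsto 1$ goes from the \emph{network} polynomial, which you do not have. Second, ``a constant is left unchanged'' contradicts your own (correct) general leaf formula: a constant leaf $c$ with scope $\{i\}$ must become $\tfrac12\bigl(c + c(1-2x_i)\bigr) = c(1-x_i)$, which is exactly how the paper's proof handles constant leaves; leaving constants unchanged would break the reassembly of the $2^{-n}$ factor and the degree of the output.
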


We give a full proof of Theorem~\ref{thm:just_leaves} in the Appendix.

\begin{sketch}
A circuit representing $\hat{p}(x)$ can be constructed with modifications pushed to the leaves inductively. Decomposability enables pushing past product nodes; linearity of the Fourier transform enables pushing past sum nodes. Leaf nodes are univariate and so can be transformed directly.
\end{sketch}

Transformations $1$ and $4$ in Figure~\ref{fig:main_result} can be simplified when the initial circuits are decomposable; the decomposability is preserved during the transformation, and the worst-case increase in size is lowered to $O(sn)$.
First, a decomposable circuit of size $s$ computing a likelihood polynomial $p(x)$ can be transformed to decomposable circuit of size $O(sn)$ computing $p(x,\bar{x})$. 
We note that this problem is exactly that of smoothing~\citep{darwiche2000tractable,andy_smoothing} and so the following lemma is included for completeness but is already known. In particular, this shows how Theorem~\ref{thm:likelihood_to_barp} can be viewed as a generalization of smoothing to circuits that are not decomposable.

\begin{lemma}[\cite{darwiche2000tractable,andy_smoothing}]\label{lem:decomp_likelihood_to_network}
A decomposable circuit of size $s$ computing likelihood polynomial $p(x)$ can be transformed to a decomposable circuit of size $O(sn)$ computing network polynomial $p(x,\bar{x})$.
\end{lemma}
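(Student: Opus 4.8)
The plan is to recognize this transformation as \emph{smoothing} and to realize it through a single bottom-up pass that introduces the $\bar{x}_i$ variables only where they are needed. Concretely, for each node $v$ with scope $\sigma_v$ computing the multilinear polynomial $f_v$, I would build a node $\tilde{v}$ computing a polynomial $\tilde{f}_v$ in the variables $\{x_i,\bar{x}_i : i\in\sigma_v\}$ maintaining two invariants: (i) substituting $\bar{x}_i=1-x_i$ recovers the original, i.e. $\tilde{f}_v|_{\bar{x}_i=1-x_i}=f_v$; and (ii) $\tilde{f}_v$ is in ``network form'' over $\sigma_v$, meaning every monomial contains exactly one of $x_i$ or $\bar{x}_i$ for each $i\in\sigma_v$. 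The root then computes a network-form polynomial over its scope, and a final gadget extends the scope to $[n]$.

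First I would handle the base case: a leaf $x_i$ (scope $\{i\}$) and a constant leaf (scope $\emptyset$) already satisfy both invariants and are left unchanged. For a product node, decomposability guarantees its children have disjoint scopes, so the product of their network-form polynomials is again in network form over the disjoint union of scopes, and the substitution $\bar{x}_i=1-x_i$ (a ring homomorphism) distributes over the product; hence product nodes need no modification. This is the step where decomposability does the real work and is what keeps the size increase small. For a sum node $v$ with children of scopes $\sigma_{v_1},\ldots,\sigma_{v_k}$ and $\sigma_v=\bigcup_j\sigma_{v_j}$, the children need not share a scope, so I would multiply the $j$-th child by the smoothing gadget $\prod_{i\in\sigma_v\setminus\sigma_{v_j}}(x_i+\bar{x}_i)$ before summing. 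Since each factor $(x_i+\bar{x}_i)$ equals $1$ under $\bar{x}_i=1-x_i$, invariant (i) is preserved, and every summand is now in network form over $\sigma_v$, giving invariant (ii). Each gadget factor has singleton scope, disjoint from the child's scope and from the others, so decomposability, and now smoothness, is maintained.

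For correctness at the root, a final gadget $\prod_{i\in[n]\setminus\sigma_{\mathrm{root}}}(x_i+\bar{x}_i)$ makes the circuit smooth and decomposable with scope $[n]$, so it computes some network polynomial $q(x,\bar{x})=\sum_{S\subseteq[n]} c_S\prod_{i\in S}x_i\prod_{i\notin S}\bar{x}_i$. By invariant (i), $q(x,1-x)=p(x)$ as elements of the polynomial ring; since $\bar{x}_i\mapsto 1-x_i$ is a bijection between network-form polynomials and multilinear polynomials in $x$ (both are determined by $2^n$ coefficients), $q$ is forced to be the unique network polynomial restricting to $p(x)$, namely $p(x,\bar{x})$. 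Equivalently, evaluating $q$ as in Proposition~\ref{prop:fast_marignals} agrees with $\Pr$ because its restriction equals $p(x)$, which agrees with $\Pr$ on $\{0,1\}^n$, forcing $c_S=\Pr(\bm{x}_S)$.

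Finally, for the size bound, each of the $O(s)$ sum-node edges receives a gadget with at most $n$ multiplications, and the $(x_i+\bar{x}_i)$ nodes can be shared globally, so the total size is $O(sn)$ with decomposability preserved throughout. I expect the main obstacle to be not any single calculation but the bookkeeping of the two invariants: in particular, verifying that non-homogeneous subpolynomials (such as a node computing $1-x_i$, whose correct network form is $\bar{x}_i$) are produced automatically at sum nodes by the gadgets, and confirming that the smoothed output is the specific network polynomial $p(x,\bar{x})$ rather than merely \emph{some} network polynomial. As the surrounding text notes, this procedure is exactly smoothing, so the result itself is standard.
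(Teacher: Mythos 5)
Your proof is correct and is exactly the standard smoothing construction that the paper cites for this lemma rather than reproving it: insert gadgets $\prod_{i\in\sigma_v\setminus\sigma_{v_j}}(x_i+\bar{x}_i)$ on sum-node edges (and one at the root) to equalize scopes, let decomposability handle product nodes for free, and use the fact that $\bar{x}_i\mapsto 1-x_i$ is a bijection between network-form and multilinear polynomials to conclude the output is specifically $p(x,\bar{x})$. The $O(sn)$ size accounting and your attention to non-homogeneous subpolynomials such as $1-x_i$ (which the gadgets automatically convert to $\bar{x}_i$) are both right.
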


Also, a decomposable circuit of size $s$ computing a generating polynomial $g(x)$ can be transformed to decomposable circuit of size $O(sn)$ computing $p(x,\bar{x})$. This problem is very similar to smoothing and can be solved in the same way; rather than smoothing with gadgets computing $x_i+\bar{x}_i$, simply use $\bar{x}_i$.

\begin{lemma}\label{lem:decomp_pgf_to_network}
A decomposable circuit of size $s$ computing generating polynomial $g(x)$ can be transformed to a decomposable circuit of size $O(sn)$ computing network polynomial $p(x,\bar{x})$.
\end{lemma}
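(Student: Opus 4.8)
The plan is to adapt the standard smoothing procedure, processing the circuit bottom-up while maintaining at every node a \emph{localized} network polynomial. Concretely, write $T_v\subseteq[n]$ for the scope of a node $v$ and let $g_v(x)=\sum_{S\subseteq T_v}c^{(v)}_S\prod_{i\in S}x_i$ be the (generating-style) polynomial it computes. I would show by induction that $v$ can be replaced by a node $v'$ computing the localized network polynomial $p_v(x,\bar{x})=\sum_{S\subseteq T_v}c^{(v)}_S\prod_{i\in S}x_i\prod_{i\in T_v\setminus S}\bar{x}_i$, i.e. the network polynomial of $g_v$ restricted to the variables in $T_v$. The root then has scope contained in $[n]$, and multiplying it by $\prod_{i\in[n]\setminus T_{\mathrm{root}}}\bar{x}_i$ yields $p(x,\bar{x})$ exactly.

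The base cases and product nodes are immediate. A leaf $x_i$ already equals its localized network polynomial (the only monomial is $S=\{i\}$, with no missing variable), and a constant leaf is unchanged. For a decomposable product node $v$ with children $u_1,\ldots,u_k$ of disjoint scopes, the identity $p_v=\prod_j p_{u_j}$ holds: since the $T_{u_j}$ partition $T_v$, the product of the per-child $x_i/\bar{x}_i$ factors reconstructs exactly $\prod_{i\in S}x_i\prod_{i\in T_v\setminus S}\bar{x}_i$ for $S=\bigcup_j S_j$ (a disjoint union). Thus I keep the product node and recurse on its children, and decomposability is preserved.

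The one substantive step is the sum node, and this is where the choice of gadget matters. For a sum node $v=\sum_j w_j u_j$, a child $u_j$ may have scope $T_{u_j}\subsetneq T_v$, so its localized network polynomial $p_{u_j}$ is missing the variables in $T_v\setminus T_{u_j}$. Because a variable absent from $g_{u_j}$ corresponds to that index never lying in $S$, the correct lift is to multiply $p_{u_j}$ by $\prod_{i\in T_v\setminus T_{u_j}}\bar{x}_i$ (the $\bar{x}_i$ gadget, in place of the $x_i+\bar{x}_i$ gadget used for ordinary smoothing). I would verify $p_v=\sum_j w_j\,p_{u_j}\prod_{i\in T_v\setminus T_{u_j}}\bar{x}_i$ by collecting the coefficient of each monomial $\prod_{i\in S}x_i\prod_{i\in T_v\setminus S}\bar{x}_i$ and checking it equals $c^{(v)}_S=\sum_{j:\,S\subseteq T_{u_j}}w_j c^{(u_j)}_S$. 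The inserted gadgets are decomposable, since the appended $\bar{x}_i$ have scope disjoint from $T_{u_j}$, so the transformation stays decomposable (and in fact becomes smooth).

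For the size bound, each incoming edge of a sum node acquires a product gadget with at most $|T_v\setminus T_{u_j}|\le n$ extra factors, and the $\bar{x}_i$ leaves can be shared, so the total number of added edges is $O(sn)$. The main thing to get right is precisely the bookkeeping at sum nodes---that replacing the smoothing gadget $x_i+\bar{x}_i$ by $\bar{x}_i$ introduces exactly the missing $\bar{x}_i$ factors and leaves all coefficients $c^{(v)}_S$ untouched---which is the content of the correctness claim above.
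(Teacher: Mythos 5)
Your proposal is correct and follows exactly the route the paper takes: it performs the standard smoothing construction but replaces the $x_i+\bar{x}_i$ gadget at sum-node children (and at the root for globally unused variables) with $\bar{x}_i$, preserving decomposability and incurring the $O(sn)$ blow-up. The paper states this in one sentence without the inductive bookkeeping; your localized-network-polynomial invariant and the coefficient check at sum nodes are a correct fleshing-out of that same argument.
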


We note that Lemmas~\ref{lem:decomp_likelihood_to_network}~and~\ref{lem:decomp_pgf_to_network} hold also for the symmetric transformations as described in Section~\ref{sec:fourier}, i.e., for decomposable versions of transformations $7$ and $10$ in Figure~\ref{fig:main_result}.

\section{Categorical Distributions}\label{sec:pgc_hard}

So far we have considered binary probability distributions with probability mass functions of the form~{$\Pr\!:\!\{0,1\}^n\!\to\!\mathbb{R}$}. Of course, categorical distributions with mass functions of the form $\Pr:S^n\to \mathbb{R}$ for an arbitrary finite set $S$ are also of interest. In the PC literature, categorical distributions are typically encoded as binary distributions using binary indicator variables~\citep{diff_approach,spns,pcs}. Indeed, network polynomials have no other obvious extension to the categorical setting; however, generating polynomials do. In fact, the generating polynomials considered in \citep{pgcs} are a restriction to the binary case of the following more general and standard definition. Let $\Pr:K^n\to\mathbb{R}$ be a probability mass function with $K=\{0,1,2,\ldots, k-1\}$. Then the probability generating polynomial of~$\Pr$~is 
\begin{equation*}
g(x)=\sum_{(d_1,d_2,\ldots,d_n)\in K^n}\Pr(d_1,\ldots,d_n)x_1^{d_1}x_2^{d_2}\cdots x_n^{d_n}.
\end{equation*}
It is natural then to consider a categorical Probabilistic Generating Circuit (PGC) as a circuit computing the generating polynomial of a categorical distribution with more than two categories. This begs the question, are categorical PGCs a tractable model in general? To this, we give a negative answer. In fact for variables with $k\ge 4$ categories, not only is marginal inference hard, but even computing likelihoods is hard.

\begin{theorem}\label{thm:pgc_hard}
Computing likelihoods on a categorical PGC is \#P-hard for $k\ge 4$ categories.
\end{theorem}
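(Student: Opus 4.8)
The plan is to reduce from a \#P-hard variant of the permanent whose combinatorial structure already has bounded degree, so that the hard instance fits inside a categorical generating polynomial using only $k=4$ categories. Computing the permanent $\per(A)$ of a $0/1$ matrix $A$ is \#P-hard even when every row and every column of $A$ contains exactly three ones, equivalently, counting perfect matchings in $3$-regular bipartite graphs is \#P-hard. I would use this bounded-degree version as the source, rather than the unrestricted permanent, precisely because unbounded variable degrees would force the number of categories $k$ to grow with $n$; controlling the degree is what pins the threshold at $k=4$.

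Given such an $n\times n$ matrix $A$ with biadjacency graph $G=(U\cup V,E)$, where $U=\{u_1,\dots,u_n\}$, $V=\{v_1,\dots,v_n\}$, and $|E|=3n$, I introduce one indeterminate $x_i$ per row $u_i$ and one indeterminate $y_j$ per column $v_j$, and define
\[
g(x,y)=\prod_{(i,j)\,:\,a_{ij}=1}\bigl(1+x_i y_j\bigr).
\]
Since each variable occurs in exactly three of the $3n$ factors, its degree in $g$ is at most $3$, so $g$ has degree at most $3$ in every variable; all its coefficients are nonnegative integers (they count edge subsets realizing a given degree sequence), and their sum is $g(1,\dots,1)=2^{3n}$. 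Hence $g/2^{3n}$ is the generating polynomial of a genuine categorical distribution over $K=\{0,1,2,3\}$ (i.e.\ $k=4$), and the explicit product form is a circuit of size $O(n)$ computing it, that is, a categorical PGC built in polynomial time. The reduction is then completed by a single likelihood query: expanding $g$, a subset $S\subseteq E$ contributes the monomial $\prod_{(i,j)\in S}x_i y_j$, which equals $\prod_i x_i\prod_j y_j$ exactly when every row index and every column index occurs once in $S$, i.e.\ when $S$ is a perfect matching of $G$. Thus the coefficient of $\prod_i x_i\prod_j y_j$ equals the number of perfect matchings, namely $\per(A)$, so the likelihood of the all-ones assignment $(1,1,\dots,1)\in K^{2n}$ under the constructed PGC is $\per(A)/2^{3n}$. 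A likelihood oracle therefore recovers $\per(A)$ after multiplying by $2^{3n}$, establishing \#P-hardness; the identical construction works verbatim for every $k\ge 4$, since no category beyond $3$ is ever used.

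The hard part is not the calculation but the choice of source problem: the whole argument hinges on encoding a \#P-hard quantity with variables of degree at most $3$, which is exactly why $k=4$ categories suffice and why Valiant's unrestricted permanent hardness cannot be plugged in directly. This degree bound is also what makes the threshold tight for this approach, restricting to $k=3$ would cap degrees at $2$, corresponding to bipartite graphs of maximum degree $2$, which are disjoint unions of paths and cycles whose perfect matchings are countable in polynomial time, consistent with the statement's requirement $k\ge 4$. A secondary point I would verify carefully is that $g/2^{3n}$ is a legitimate probability distribution, nonnegative coefficients summing to one, which follows at once from nonnegativity of the expansion coefficients together with $g(1,\dots,1)=2^{3n}$.
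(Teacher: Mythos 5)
Your proof is correct, but it reaches the degree-$3$ bound by a genuinely different route than the paper. The paper reduces from the \emph{unrestricted} $\{0,1\}$-permanent and does the degree-bounding work itself: it proves a sparsification lemma transforming any $M$ into a permanent-equivalent $M'$ with at most three nonzero entries per column, then uses Valiant's row-product polynomial $\prod_i\sum_j M'_{i,j}x_j$, in which the degree of $x_j$ is the number of nonzeros in column $j$. You instead push the degree bound into the source problem, invoking the \#P-hardness of the permanent of $0/1$ matrices with exactly three ones per row and column (counting perfect matchings in $3$-regular bipartite graphs), and then use the edge-product polynomial $\prod_{(i,j):a_{ij}=1}(1+x_iy_j)$, in which variable degrees are vertex degrees. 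Both constructions yield degree at most $3$ and hence $k=4$, and both extract $\per(A)$ as the coefficient of the all-ones monomial via a single likelihood query. Your version is shorter and handles one point more carefully than the paper: you normalize by $g(1,\dots,1)=2^{3n}$ so the circuit is the generating polynomial of a genuine distribution, whereas the paper's $g$ would need division by the product of row sums of $M'$ to be a bona fide PGC. The cost is that your argument rests on a nontrivial external hardness result asserted without citation -- it is true, but you should cite it (e.g., Dagum and Luby, 1992) -- whereas the paper is self-contained given only Valiant's original theorem. Your closing remark that $k=3$ would cap degrees at $2$ and collapse this particular reduction is a useful sanity check on the threshold, though, as you note, it does not establish tractability for $k=3$.
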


\begin{figure}
\newcommand\x{\times}
\newcommand\y{\cellcolor{green!10}}
\center
$
\begin{pNiceMatrix}[margin]
0 & 1 & 0 & 0  \\
0 & \Block[fill=orange!15,rounded-corners]{2-1}{}1 & 0 & 0  \\
0 & 1 & 0 & 0  \\
0 & 1 & 0 & 0  \\
\end{pNiceMatrix}
\to \begin{pNiceMatrix}[margin]
0 & 1 &  0 & 0 &\Block[fill=blue!15,rounded-corners]{5-1}{}0 \\
0 & \Block[fill=orange!15,rounded-corners]{2-1}{}0  & 0 & 0& \underline{\underline{1}} \\
0 & 0  & 0 & 0 & \underline{\underline{1}}\\
0 & 1  & 0 & 0& 0 \\
\Block[fill=blue!15,rounded-corners]{1-5}{} 0 & \underline{1}  & 0 & 0 & \underline{1}\\
\end{pNiceMatrix}
$
\caption{An example of the permanent-preserving operation used to make $M$ sparse. The new row and column are shaded in blue. The newly-added nonzero entries that preserve the permanent of the matrix are singly-underlined. The two nonzero entries that moved from their original column (highlighted in orange) to the new one are doubly-underlined. The number of nonzero entries in the second column has decreased by one.}
\label{fig:perm-sparsification}
\end{figure}

Here we provide a sketch of our proof of Theorem~\ref{thm:pgc_hard}, giving the full proof in the Appendix. Our proof is a reduction from $\{0,1\}$-permanent to categorical PGC inference. The classic work of \citet{valiant_perm} shows that computing the permanent of matrices with entries in $\{0,1\}$ is \#P-complete. 
The permanent of a matrix $M$ is 
\begin{equation*}
\per M = \sum_{\sigma\in S_n}\prod_{i=1}^nM_{i,\sigma(i)}
\end{equation*}
where $S_n$ is the symmetric group of order $n$. Our reduction proceeds in two steps. 

First, we find a (modestly) larger matrix $M'$ with the same permanent as $M$ and with the sparsity property that every column contains at most three nonzero entries. To do so, let $M\in\{0,1\}^{n\times n}$, and assume the $t$th column of $M$ still has more than $3$ nonzero entries. Add a new $(n+1)$th row and column to the matrix with main diagonal entry $1$, and with the $t$th entry of the new row $1$, and all other entries zero. We are now free to `move' any two nonzero entries from the $t$th column to the new $(n+1)$th column without changing the permanent of the matrix. Figure~\ref{fig:perm-sparsification} shows an example of this sparsifying, permanent-preserving operation. The number of nonzero entries in the $t$th column has now decreased by one (and the new column has only three nonzero entries), and so repeating this operation at most $n^2$ times will yield the desired matrix $M'$.

Second, we use a polynomial construction from \citet{valiant_p_construction} (and \citet{ac_coef}):
\begin{equation*}
g(x_1,\ldots,x_n)=\prod_{i=1}^{n'}\sum_{j=1}^{n'} M'_{i,j}x_j.
\end{equation*}
The coefficient of the monomial $\prod_{i=1}^{n'}x_i$ in $g(x)$ is exactly $\per M'=\per M$. And, by the column-sparsity of $M'$, any $x_i$ has degree at most $3$ in $g$, and so $g$ can be interpretted as a categorical generating polynomial for $k=4$ categories. By this interpretation, the coefficient of the monomial $\prod_{i=1}^{n'}x_i$ in $g$ is the probability of the assignment $X_1=X_2=\ldots=X_{n'}=1$, a single likelihood, completing our reduction. 

This motivates the need to research tractable categorical distributions, for example, possibly in the direction suggested by~\citet{cao2023scaling}. In particular, this calls for careful consideration of the use of generating functions over categorical variables, which are not tractable models in general.

\section{Conclusion}\label{sec:conclusion}

We studied tractable probabilistic circuits computing various polynomial representations of probability distributions. For binary probability distributions we show that a number of previously studied polynomials have equally expressive-efficient circuit representations. Among circuits computing network, likelihood, generating, and Fourier polynomials, all support tractable marginal inference, and, given a circuit computing any one polynomial, a circuit computing any other can be obtained with at most a polynomial increase in size. This establishes a relationship between several previously-independent marginal inference algorithms, and establishes one novel marginal inference algorithm, namely for circuits computing likelihood polynomials. These results connect well-studied mathematical objects like generating functions and Fourier transforms in their forms as tractable probabilistic circuits, opening up potential future research, for example leveraging theory developed in one semantics and translating it to another, or learning in one representation space and transforming to~another.

\begin{acknowledgements} 
We thank Benjie Wang, Poorva Garg, and William Cao for helpful comments on this work. This work was funded in part by the DARPA PTG Program under award HR00112220005, the DARPA ANSR program under award FA8750-23-2-0004, and NSF grants \#IIS-1943641, \#IIS-1956441, \#CCF-1837129.
\end{acknowledgements}

\bibliography{main}

\newpage

\onecolumn

\appendix
\section{Proofs}\label{app:proofs}

Proof of Theorem~\ref{thm:just_leaves}.
\begin{proof}
Let $p(x)$ be a decomposable circuit of size $s$ computing a likelihood polynomial. We construct a circuit computing $\hat{p}(x)$ inductively as follows. For a product node we have $p(x)=q(x_q)r(x_r)$ where $x_q$ and $x_r$ partition $x$ and with $x_q$ of dimension $0\le d\le n$. We have
\begin{align}
\hat{p}(x)&=2^{-n}\sum_{v\in\{0,1\}^n}p(v)(-1)^{\langle v,x\rangle}&& \\
&=2^{-n}\sum_{v\in\{0,1\}^n}q(v_q)r(v_r)(-1)^{\langle v_q,x_q\rangle}(-1)^{\langle v_r,x_r\rangle}&& \\
& =\left(2^{-d}\sum_{v_q\in\{0,1\}^d}q(v_q)(-1)^{\langle v_q,x_q\rangle}\right)
\cdot \left(2^{d-n}\sum_{v_r\in\{0,1\}^{n-d}}r(v_r)(-1)^{\langle v_r,x_r\rangle}\right) && \\
& =\hat{q}(x_q)\hat{r}(x_r) &&
\end{align}
where the first equality follows from definition, the second from the hypothesis, the third by factoring, and the final from definition. For a sum node, we have $p(x)=\sum_{i}w_ip_i(x)$, and so 
\begin{align}
\hat{p}(x) &
=2^{-n}\sum_{v\in\{0,1\}^n}p(v)(-1)^{\langle v,x\rangle} && \\
&=2^{-n}\sum_{v\in\{0,1\}^n}\left(\sum_{i}w_ip_i(v)\right)(-1)^{\langle v,x\rangle} && \\
&=\sum_{i}w_i2^{-n}\sum_{v\in\{0,1\}^n}p_i(v)(-1)^{\langle v,x\rangle} && \\
&=\sum_{i}w_i\hat{p_i}(x) &&
\end{align}
where the equalities follow, respectively, from definition, hypothesis, commutativity of addition, and definition.

For leaf nodes, it suffices to consider only univariate leaves that are children of sums; for any leaf a child of a product node, add a sum node with weight $1$ between them.
Then, for a univariate child of a sum node with scope the singleton $\{i\}$, we have either $p(x_i)=c$ for constant $c\in\mathbb{R}$, and so $
 \hat{p}(x_i) 
=2^{-n}(c+c(1-2x_i))$
or $p(x_i)=x_i$, in which case
$
\hat{p}(x_i) = 2^{-n}(1-2x_i).$
\end{proof}

Proof of Theorem~\ref{thm:pgc_hard}.

\begin{proof}
We provide a deterministic, polynomial-time reduction from $\{0,1\}$-permanent to likelihood computation on a categorical PGC with $k=4$ categories. Let $M\in \{0,1\}^{n\times n}$. Our reduction proceeds in two steps. We first `sparsify' the matrix $M$ by finding a (modestly) larger matrix $M'$ with the property that every column of $M'$ contains at most three nonzero entries, while not changing the permanent: i.e. $\per M=\per M'$. Then, we construct a simple circuit\footnote{This polynomial has appeared before in \citep{valiant_p_construction,ac_coef}.} using the entries of $M'$ which computes a polynomial $g$ such that the coefficient of a certain monomial in $g$ is exactly $\per M=\per M'$. The column-sparsity of $M'$ guarantees that the degree of $g$ is at most $3$, making it a valid generating function for $k=4$ categories, and the desired coefficient is a particular likelihood.

{\bf Step 1:}
Suppose the $t$th column of $M$ contains more than three nonzero entries; if there is no such column, proceed to the second step. Intuitively, to `sparsify' $M$ we can append a new $(n+1)$th row and column to the matrix, setting their main diagonal entry to $1$ as well as the $t$th entry of the new row -- all other entries zero. Then, we are free to `move' nonzero entries from the $t$th column to the new $(n+1)$th column without affecting the permanent. For any nonzero term in $\per M=\sum_\sigma \prod_i^n M_{i,\sigma(i)}$, there is a corresponding nonzero term in $\per M'=\sum_\sigma \prod_i^{n+1} M'_{i,\sigma(i)}$, and no new nonzero terms have been introduced. 

We now show this explicitly. Let $M\in\{0,1\}^n$ and assume the $t$th column of $M$ has more than three nonzero entries. Let $M_{a,t}=M_{b,t}=1$ with $a\neq b$ be two of the nonzero entries in the $t$th column. Form a new matrix which is simply $M$ but with these two entries set to zero:
\begin{equation}
M^*_{i,j}=
\begin{cases}
0 & i\in \{a,b\}\text{ and } j=t\\
M_{i,j} & \text{else}
\end{cases}
\end{equation}
We now define the new $(n+1)$th row and column $r,c\in\{0,1\}^n$. Set $r_t=1$ and for $j\neq t$ set $r_j=0$. Set $c_a=c_b=1$ and for $j\notin \{a,b\}$ set $c_j=0$. 
We now form the new matrix $M'$:
\begin{equation}
M'=\begin{pNiceArray}{ccc|c}
\Block{3-3}<>{\mathbf{M^*}}   & & & c_1 \\
                                    & & & \vdots \\
                                    & & & c_n \\ \hline
                                r_1 &\cdots &r_n & 1
\end{pNiceArray}
\end{equation}
To show that $\per M=\per M'$, we provide a bijection between the nonzero monomials of $\per M=\sum_\sigma \prod_i^n M_{i,\sigma(i)}$ and the those of $\per M'=\sum_\sigma \prod_i^{n+1} M'_{i,\sigma(i)}$ (viewing the monomials as formal objects). Recall that $\per M$ is the sum
\begin{equation}
\per M = \sum_{\sigma\in S_n}\prod_{i=1}^nM_{i,\sigma(i)}
\end{equation}
Because $\sigma$ is a bijection, every term $\prod_{i=1}^nM_{i,\sigma(i)}$ in this sum contains $M_{i,t}$ for some $0\le i\le n$. If $i\notin\{a,b\}$ we map
\begin{equation}\label{eq:map1}
M_{1,\sigma(1)}\cdots M_{n,\sigma(n)} 
\mapsto 
M_{1,\sigma(1)} \cdots M_{n,\sigma(n)} M_{n+1,n+1}.
\end{equation}
If $i\in \{a,b\}$ we map
\begin{equation}\label{eq:map2}
M_{1,\sigma(1)}\cdots M_{i,\sigma(i)=t}\cdots  M_{n,\sigma(n)} 
\mapsto 
M_{1,\sigma(1)} \cdots M_{n+1,t}\cdots M_{n,\sigma(n)} M_{i,n+1}.
\end{equation}

This map is injective by construction. To show that it is also surjective, consider an arbitrary nonzero term $m=\prod_i^{n+1} M'_{i,\sigma(i)}$ in $\per M'=\sum_\sigma \prod_i^{n+1} M'_{i,\sigma(i)}$. Since $M_{n+1,t}$ and $M_{n+1,n+1}$ are the only two nonzero entries of the $(n+1)$th row of $M$, one of them must appear in $m$. If $M_{n+1,n+1}$ is in $m$, then the the monomial obtained by removing $M_{n+1,n+1}$ from $m$ is the preimage of $m$ by Eq.~\ref{eq:map1}. If $M_{n+1,t}$ appears in $m$, then some other nonzero entry of the $(n+1)$th column must appear in $m$, namely $M_{i,n+1}$ for $i\in \{a,b\}$. So remove $M_{i,n+1}$ and $M_{n+1,t}$ from $m$ and inserts $M_{i,t}$ to obtain the preimage of $m$ by Eq.~\ref{eq:map2}.
So, we have $\per M=\per M'$. 

Observe that the number of nonzero entries in column $t$ has decreased by one, and the number of nonzero entries in the new $(n+1)$th column is three. Repeat this step until all columns contain at most three nonzero entries, requiring at most $n^2$ repetitions.

{\bf Step 2:} Call the matrix resulting from the first step $M'\in\{0,1\}^{n'\times n'}$ (where $n'\le n+n^2$). Now consider the polynomial
\begin{align*}
g(x_1,\ldots,x_{n'}) =&\prod_{i=1}^{n'}\sum_{j=1}^{n'} M'_{i,j}x_j\\
=& (M'_{1,1}x_1+M'_{1,2}x_2+\ldots+M'_{1,n'}x_{n'}) && \\
\cdot   &(M'_{2,1}x_1+M'_{2,2}x_2+\ldots+M'_{2,n'}x_{n'}) && \\
        & \vdots && \\
\cdot   & (M'_{n',1}x_1+M'_{n',2}x_2+\ldots+M'_{n',n'}x_{n'}) && 
\end{align*}
where $M'_{i,j}\in\{0,1\}$ are the entries of $M'$. Note that this expression for $g(x)$ directly provides a polynomial size circuit for $g(x)$ (in fact, a formula: a circuit whose DAG is a tree). Observe that the coefficient of the monomial $\prod_{i=1}^{n'}x_i$ in $g(x)$ is exactly $\per M'=\per M$. Moreover, the degree of any $x_i$ in $g(x)$ is at most three by the column-sparsity of $M'$, and so $g(x)$ can be interpreted as a categorical PGC with $k=4$ categories. Given this interpretation, the coefficient of the monomial $\prod_{i=1}^{n'}x_i$ in $g(x)$ (which is $\per M$) is the probability of the assignment $X_1=X_2=\ldots=X_{n'}=1$, a single likelihood.

\end{proof}

\end{document}